\definecolor{cb-black}      {RGB}{  0,   0,   0}
\definecolor{cb-blue-green} {RGB}{  0,  073,  073}
\definecolor{cb-green-sea}  {RGB}{  0, 146, 146}
\definecolor{cb-rose}       {RGB}{255, 109, 182}
\definecolor{cb-salmon-pink}{RGB}{255, 182, 119}
\definecolor{cb-purple}     {RGB}{ 73,   0, 146}
\definecolor{cb-blue}       {RGB}{ 0, 109, 219}
\definecolor{cb-lilac}      {RGB}{182, 109, 255}
\definecolor{cb-blue-sky}   {RGB}{109, 182, 255}
\definecolor{cb-blue-light} {RGB}{182, 219, 255}
\definecolor{cb-burgundy}   {RGB}{146,   0,   0}
\definecolor{cb-brown}      {RGB}{146,  73,   0}
\definecolor{cb-clay}       {RGB}{219, 209,   0}
\definecolor{cb-green-lime} {RGB}{ 36, 255,  36}
\definecolor{cb-yellow}     {RGB}{255, 255, 109}
\newcommand\newlink[2]{{\protect\hyperlink{#1}{\normalcolor #2}}}
\newcommand\newtarget[2]{\Hy@raisedlink{\hypertarget{#1}{}}#2}
\def\eqref#1{equation~\ref{#1}}
\def\1{\bm{1}}
\DeclareMathAlphabet{\mathsfit}{\encodingdefault}{\sfdefault}{m}{sl}
\SetMathAlphabet{\mathsfit}{bold}{\encodingdefault}{\sfdefault}{bx}{n}
\DeclareMathOperator*{\argmax}{arg\,max}
\DeclareMathOperator*{\argmin}{arg\,min}
\newcommand{\norm}[1]{\left\Vert #1 \right\Vert}
\newcommand{\cel}{\operatorname{\textsc{cel}}}
\newcommand{\mse}{\operatorname{\textsc{mse}}}
\newcommand{\Int}{\operatorname{Int}}
\newcommandx*\YH[1][1= , usedefault]{\newlink{def:YH}{\bar{Y}_{\mathcal{H}}^{#1}}}
\newcommandx*\YB[1][1= , usedefault]{\newlink{def:YB}{\bar{Y}_{\mathcal{B}}^{#1}}}
\newcommandx*\Yb[2][1= , 2= , usedefault]{\newlink{def:Yb}{\bar{Y}_{#1}^{#2}}}
\newcommandx*\Yt[2][1= , 2=t, usedefault]{\newlink{def:Yt}{Y_{#1}^{#2}}}
\newcommandx*\wb[2][1= , 2= , usedefault]{\newlink{def:wb}{\bar{w}_{#1}^{#2}}}
\newcommandx*\wt[2][1= , 2=t, usedefault]{\newlink{def:wt}{w_{#1}^{#2}}}
\newcommandx*\pt[2][1= , 2=t, usedefault]{\newlink{def:pt}{p_{#1}^{#2}}}
\newcommand{\simplex}{\newlink{def:simplex}{\Delta_{c}}}
\newcommand{\Dp}{\newlink{def:public-ds}{\mathcal{D}_{\text{pub}}}}
\newcommand{\defi}{\stackrel{\mathrm{\scriptscriptstyle def}}{=}}
\newcommand{\alphafrac}{\newlink{def:alphafrac}{\alpha}}
\newcommand{\Nclients}{\newlink{def:Nclients}{N}}
\newcommand{\Hclients}{\newlink{def:Hclients}{\mathcal{H}}}
\newcommand{\Bclients}{\newlink{def:Bclients}{\mathcal{B}}}
\newcommand{\lma}{\newlink{def:lma}{LMA}}
\newcommand{\cpa}{\newlink{def:cpa}{CPA}}
\newcommand{\expguard}{\newlink{def:expguard}{ExpGuard}}
\newcommand{\cronus}{\newlink{def:cronus}{Cronus}}
\LetLtxMacro{\oldGls}{\Gls}
\renewcommand*{\Gls}[1]{{\hypersetup{linkcolor=black}\oldGls{#1}}}
\LetLtxMacro{\oldacrshort}{\acrshort}
\renewcommand*{\acrshort}[1]{{\hypersetup{linkcolor=black}\oldacrshort{#1}}}
\LetLtxMacro{\oldacrlong}{\acrlong}
\renewcommand*{\acrlong}[1]{{\hypersetup{linkcolor=black}\oldacrlong{#1}}}
\LetLtxMacro{\oldglsxtrshort}{\glsxtrshort}
\renewcommand*{\glsxtrshort}[1]{{\hypersetup{linkcolor=black}\oldglsxtrshort{#1}}}
\definecolor{darkgray}{rgb}{0.5,0.5,0.5}
\newtheorem{theorem}{Theorem}
\newtheorem{lemma}[theorem]{Lemma}
\newtheorem{remark}[theorem]{Remark}
\newtheorem{corollary}[theorem]{Corollary}
\newacronym{fl}{FL}{Federated Learning}
\newacronym{fd}{FedDistill}{Federated Learning using Distillation}
\newacronym{kd}{KD}{Knowledge Distillation}
\newacronym{fedavg}{FedAVG}{Federated Averaging}
\newacronym{iid}{i.i.d.\@}{identically independently distributed}
\newacronym{gm}{GM}{geometric median}
\newacronym{lie}{LIE}{Little Is Enough}
\newacronym{hips}{HIPS}{Hiding In Plain Sight}
\newacronym{gd}{GD}{Gradient Descent}
\newacronym{sgd}{SGD}{Stochastic Gradient Descent}
\newacronym{lma}{LMA}{Loss Maximization Attack}
\newacronym{cpa}{CPA}{Class Prior Attack}
\newacronym{cel}{CEL}{Cross-Entropy Loss}
\newacronym{mse}{MSE}{Mean-Squared Error}
\newacronym{rlf}{RLF}{Random Label Flip}
\newacronym{filter}{Filter}{filtering-based robust mean estimation}
\newacronym{egf}{EG+F}{ExpGuard+Filter}
\newif\ifTODO
\title{On the Byzantine-Resilience of Distillation-Based Federated Learning}
\author{Christophe Roux\thanks{Equal contribution}\ , Max Zimmer\footnotemark[1]\ \ \&\ Sebastian Pokutta\\
Department for AI in Society, Science, and Technology, Zuse Institute Berlin, Germany\\
Institute of Mathematics, Technische Universität Berlin, Germany\\
\texttt{\{roux,zimmer,pokutta\}@zib.de} \\
}
\begin{document}

\maketitle
\begin{abstract}
  \gls{fl} algorithms using \gls{kd} have received increasing attention due to their favorable properties with respect to privacy, non-i.i.d.\ data and communication cost. These methods depart from transmitting model parameters and instead communicate information about a learning task by sharing predictions on a public dataset.
  In this work, we study the performance of such approaches in the byzantine setting, where a subset of the clients act in an adversarial manner aiming to disrupt the learning process.
  We show that \gls{kd}-based \gls{fl} algorithms are remarkably resilient and analyze how byzantine clients can influence the learning process.
  Based on these insights, we introduce two new byzantine attacks and demonstrate their ability to break existing byzantine-resilient methods. Additionally, we propose a novel defence method which enhances the byzantine resilience of \gls{kd}-based \gls{fl} algorithms.
  Finally, we provide a general framework to obfuscate attacks, making them significantly harder to detect, thereby improving their effectiveness.
\end{abstract}
\glsresetall
\section{Introduction}
\label{sec:introduction}

\gls{fl} allows training machine learning models while keeping data private. The most common \gls{fl} algorithm is \gls{fedavg} \citep{mcmahan_communication-efficient_2017}, where clients train models on their local data and share the updated model parameters with a central server. The server combines these parameters and sends back an updated model.
While \gls{fedavg} has found success in diverse applications \citep{rieke2020future,ramaswamy2019federated,yang2018applied,sheller2019multiinstitutional,chen2019federated}, it has notable limitations such as high communication costs due to repeatedly transmitting the model parameters, susceptibility to inversion attacks which reconstruct private data from the parameters \citep{melis2018exploiting,nasr2019comprehensive}, or performance degradation due to non \gls{iid} local data among the clients \citep{kairouz_advances_2019}.
Most importantly for this work, standard \gls{fedavg} lacks resilience to byzantine clients, i.e., clients which behave adversarially.
This has fueled efforts to create provably byzantine-resilient \gls{fedavg} variants \citep{allen-zhu_byzantine-resilient_2020,alistarh_byzantine_2018,mhamdi2018hidden,chen_distributed_2017,zhu2023byzantine}.

A recent line of work uses \gls{kd} in order address some of these challenges \citep{hinton_distilling_2015}.
In general, \gls{kd} transfers knowledge by training a \emph{student} model on the aggregated predictions of multiple \emph{teacher} models. For \gls{fl}, it can enhance or replace the transmission of model parameters by sharing the clients' predictions on a public dataset. The server then uses \gls{kd} to distill this information into its model.
Such methods offer reduced communication overhead, enhanced privacy and robustness to non-\gls{iid} data, but they are not well explored in the byzantine setting \citep{sattler2020communicationefficient,sattler_fedaux_2021,lin_ensemble_2021,he_group_2020,cheng_fedgems_2021,papernot_semi-supervised_2017,chang_cronus_2019,gong2022preserving,fancollaborative2023}.

In this work, we investigate \gls{kd}-based \gls{fl} algorithms in the byzantine setting, restricting ourselves to classification.
While many different algorithms have been proposed in this area, we focus on a prototypical variant, referred to as \gls{fd} because of its simplicity and the fact the clients communicate with the server \emph{only} through predictions on an unlabeled public dataset.
We analyze how this communication modality--sharing predictions instead of model parameters--impacts the ability of  byzantine clients to disrupt training, demonstrating that standard \gls{fd} offers greater byzantine-resilience than standard \gls{fedavg}.

We identify two key factors underlying this resilience: First, byzantine clients in \gls{fd} can only modify predictions, which are constrained to the bounded and relatively low-dimensional probability simplex. This contrasts sharply with \gls{fedavg}, where attacks occur in the unbounded, high-dimensional parameter space. Second, byzantine clients in \gls{fd} can only influence the server model indirectly through the distillation objective, whereas in \gls{fedavg} they directly manipulate the model parameters.

Guided by these insights, we introduce two novel \gls{fd}-specific attacks that successfully bypass existing byzantine-resilient \gls{fd} variants. In response, we propose a new defence mechanism that detects and filters byzantine clients by leveraging client information from both past and current communication rounds. Additionally, we present a general framework to obfuscate attacks, making byzantine clients harder to detect and hence calling for further research on defence mechanisms. 
To summarize, our contributions are as follows:
\begin{enumerate}
\item We analyze \gls{fd}, a prototypical \gls{kd}-based \gls{fl} algorithm, in the byzantine setting, focusing on its unique communication method of transmitting predictions rather than parameters to the server. Our analysis reveals that byzantine clients have only a limited impact on vanilla \gls{fd}, rendering it more resilient than vanilla \gls{fedavg}.
\item We introduce \glsxtrshort{lma} and \glsxtrshort{cpa}, two novel and effective byzantine attacks which are specifically designed for \gls{fd}. Our extensive experiments demonstrate stronger disruption of the training process than previous approaches, even when comparing against existing byzantine-resilient \gls{fd} variants.
\item In response, we propose \expguard{}, a novel defence mechanism for \gls{fd} that incorporates information on client behavior in both current and past communication rounds. \expguard{} significantly improves byzantine resilience compared to previous methods that rely solely on robust aggregation applied independently to each sample.
\item Finally, we propose \glsxtrshort{hips}, a general method to obfuscate byzantine attacks, making them harder to detect. In the presence of defence mechanisms, \glsxtrshort{hips} improves the effectiveness of byzantine attacks.
\end{enumerate}

Our findings represent an important step forward in the analysis of byzantine \gls{fl}, providing insights into both the vulnerabilities and defences of \gls{kd}-based \gls{fl} systems. By developing novel attack and defence mechanisms, we further advance the robustness of \gls{kd}-based \gls{fl}. 

\paragraph{Outline.}
We begin by reviewing prior work on byzantine \gls{fl} and applications of \gls{kd} in \gls{fl} in \cref{sec:related-work}. We then introduce our problem setting, detail the \gls{fd} algorithm, and describe our experimental methodology in \cref{sec:preliminaries}. In \cref{sec:fd}, we analyze how byzantine clients affect the server model in both \gls{fd} and \gls{fedavg}, demonstrating that \gls{fd} is more resilient by design. We present our two novel byzantine attacks in \cref{sec:attack}, followed by our defence mechanism in \cref{sec:filter-exp}. Finally, we describe \glsxtrshort{hips} in \cref{sec:lied}, our technique for making byzantine attacks harder to detect by obfuscating malicious behavior.

\section{Related Work} \label{sec:related-work}
\paragraph{\gls{kd} in \gls{fl}.}
One common approach of employing \gls{kd} in \gls{fl} is to transmit information from the clients to the server via predictions on a  publicly available unlabeled dataset \citep{li_fedmd_2019,chang_cronus_2019,cheng_fedgems_2021,gong2022preserving,lin_ensemble_2021,sturluson_fedrad_2021,bistritz_distributed_nodate}.
Some works \citep{li_fedmd_2019,cheng_fedgems_2021} assume the availability of a \emph{labeled} public dataset.
\citet{lin_ensemble_2021} and \citet{sturluson_fedrad_2021} study augmenting \gls{fedavg}-type algorithms using \gls{kd}. In this setting clients send model parameters as well as predictions on a public dataset to the server, meaning that they can disturb the training process both via the parameters and predictions.

\paragraph{Byzantine \gls{fedavg}.}
Some byzantine-resilient variants replace the mean with more robust alternatives for aggregating client updates \citep{blanchard_machine_2017,mhamdi2018hidden,yin2021byzantine,chen_distributed_2017}.
Additionally, \citet{karimireddy2022byzantinerobust,allouah2023fixing} propose simple preprocessing steps that can be used to improve the byzantine-resilience. These approaches are orthogonal to the defences we propose in this paper and could be used to further improve their performance.
However, replacing the mean with robust aggregation methods may disregard the honest clients' predictions, which can hinder performance, and remain vulnerable to more sophisticated attacks \citep{xie_fall_2020,baruch2019little,shejwalkar2021manipulating}.
These issues have been addressed by considering historical client behavior throughout the training process \citep{alistarh_byzantine_2018,allen-zhu_byzantine-resilient_2020,karimireddy_learning_2021}.
The previously mentioned approaches rely on the assumption that the data held by the clients are homogeneous (\gls{iid}).
As the data becomes more heterogeneous (non-\gls{iid}), it becomes harder to distinguish between a malicious update and an update created by dissimilar data which holds valuable information.
Some recent works introduced variants of \gls{fedavg} that have limited byzantine-resilience in the heterogeneous setting \citep{karimireddy2022byzantinerobust,el-mhamdi2021Collaborative}.
Note however that even in the benign setting, non-\gls{iid} data poses serious issues, e.g. \citep{kairouz_advances_2019}, which is why we restrict ourselves to the \gls{iid} setting in the main paper. However, we show that our attacks and defences are also effective in the non-\gls{iid} setting in \cref{sec:ablations}.

\paragraph{Byzantine \gls{fd}.}
Federated Robust Adaptive Distillation \citep{sturluson_fedrad_2021} is a variant of \gls{fedavg} that uses \gls{kd} to increase its byzantine-resilience in the non-\gls{iid} setting.
After local training, the server aggregates the updated parameters of the clients using a weighted sum. The weights are computed based on how much the prediction of the clients conform to the prediction of the majority.
Since this method relies on sharing the parameters and predictions on a public dataset, attacks as well as defences operate both in the prediction and parameter space.
In contrast, the goal of this work is to isolate the effects of attacks and defences that operate purely in the prediction space.
Most relevant to our work, Federated Ensemble Distillation \citep{chang_cronus_2019} uses predictions on a public dataset for both client-server and server-client communication. The server only aggregates client predictions, sends them back, and clients train both the public dataset  using the aggregated predictions as labels and their local data. They introduce \cronus{}, a byzantine-resilient aggregation method, which improves the resilience but suffers from bad empirical performance in the benign setting.
Both of these works concentrate on designing byzantine-resilient algorithms and give limited attention to finding more effective byzantine attacks, testing their algorithms only against simple label flip heuristics.

\section{Preliminaries} \label{sec:preliminaries}
We limit ourselves to the supervised classification setting and assume that there are $\newtarget{def:Nclients}{\Nclients}$ clients in total.
Let $\newtarget{def:Hclients}{\Hclients}$ be the set of \emph{honest} clients and $\newtarget{def:Bclients}{\Bclients}$ the set of \emph{byzantine} clients, assuming that an $\newtarget{def:alphafrac}{\alphafrac}$-fraction of the clients are byzantine where $\alphafrac<0.5$, which is a standard assumption in byzantine \gls{fl} to expect reasonable outcomes.
Each honest client $i \in [\Nclients ]$ holds a \emph{private} dataset $\mathcal{D}_i$ sampled from the same distribution as the publicly available but \emph{unlabeled} dataset $\newtarget{def:public-ds}{\Dp}$. The objective of \gls{fd} is to train a classifier based only on $\Dp$ using the predictions of the clients, which are computed and transmitted in multiple communication rounds.

Given a sample $x$, we denote by $\newtarget{def:Yt}{\Yt[i](x)}$ the prediction of client $i$ in communication round $t$ using neural network parameters $\newtarget{def:wt}{\wt[i]}$. The honest clients infer their predictions by using their local model $h(x,\wt[i])$ while the byzantine clients can send \emph{any} vector in $\newtarget{def:simplex}{\simplex}$, the $c$-dimensional probability simplex with $c$ being the number of classes.
We define $\Yt[i](\Dp)\defi \{\Yt[i](x): x\in \Dp\}$ as the set of predictions of client $i$ on every datapoint in $\Dp$. We sometimes drop the dependence on $x$ for simplicity. Further, we denote the aggregated prediction of all clients by $\Yb[ ][t](x)$ and the aggregated parameters of all clients by $\newtarget{def:wb}{\wb[i]}$.
At the end of each a communication round $t$, the server has received the \emph{individual} client predictions $\Yt[i](\Dp)$ of each client $i$, but is unaware which clients are byzantine.

We assume that the byzantine clients have access to the \emph{individual} predictions of the honest clients on $\Dp$ before sending their own predictions, allowing to base their attack on them. Byzantine clients are permitted to collude. However, their identities remain consistent across iterations, ensuring that the server can discern which client generated which set of predictions. These assumptions are analogous to byzantine \gls{fedavg}, where the byzantine clients also cannot change identity, can collude and are assumed to have access to the individual parameter updates sent to the server by the clients  \citep{allen-zhu_byzantine-resilient_2020,alistarh_byzantine_2018,karimireddy_learning_2021}.

\paragraph{FedDistill vs. FedAVG comparison.}

  \begin{wrapfigure}{R}{0.6\textwidth}
    \vspace{-1em}

    \begin{minipage}{0.6\textwidth}
\begin{algorithm}[H]
  \caption{\glsentryfull{fl} }\label{alg:fl}
  \begin{algorithmic}[1]
    \For{communication round $t=0$ to $T-1$}
      \State \textsc{Server}: Broadcast parameters $\wb[i]$ to the clients\label{line:broadcast-param}
      \State \textsc{Clients}: Train on private datasets\label{line:train-client}
\vspace{0.2\baselineskip}
    \Statex\hspace{1.5em}\rule{0.025\textwidth}{.4pt}~{\textbf{\normalsize FedAVG}}~\hrulefill
    \State \textsc{Clients}: Send updated parameters to server\label{line:params2server}
    \State \textsc{Server}: Aggregate parameters to obtain $\wb[i][t+1]$\label{line:agg-params}
    \Statex \vspace{-1.5mm}\hspace{1.5em}\hrulefill 
\Statex \hspace{1.5em} \rule{0.025\textwidth}{.4pt}~{\textbf{\normalsize \gls{fd}}}~\hrulefill
\State \textsc{Clients}: Send public dataset predictions to server\label{line:comp-pred}
    \State \textsc{Server}: Train on public dataset with aggregated 
    \Statex \hspace{1.5em} client predictions to obtain $\wb[i][t+1]$ \label{line:train-server}
 \Statex \vspace{-1.5mm}\hspace{1.5em}\hrulefill
\vspace{0.1\baselineskip}
\EndFor
    \State \textbf{Output:} $\bar{w}_{T}$
  \end{algorithmic}
\end{algorithm}
\end{minipage}
  \end{wrapfigure}
\cref{alg:fl} showcases \gls{fd} and \gls{fedavg} in a unified way. At the beginning of each communication round, the server broadcasts its parameters $\wb[i]$ to the clients (\cref{line:broadcast-param}).
The clients train on their local datasets starting from these parameters (\cref{line:train-client}). In \gls{fedavg}, clients directly send their updated parameters $\wt[i][t+1]$ to the server (\cref{line:params2server}) and the server aggregates these parameters to obtain the new server parameters $\wb[i][t+1]$ (\cref{line:agg-params}).
In \gls{fd}, clients transmit information about the learning task via their predictions on a public dataset (\cref{line:comp-pred}) and the server aggregates these and updates its model by training on the public dataset using the aggregated prediction as labels (\cref{line:train-server}). Since in \gls{fd}, the server can only be influenced by the clients predictions on $\Dp$, we can study the byzantine impact given this communication modality in isolation.

\paragraph{Experimental setup.}
We evaluate \gls{fd} on the CIFAR-10/100 \citep{krizhevsky2009learning}, CINIC-10 \citep{darlow2018cinic}, and Clothing1M \citep{xiao2015learning} datasets using the ResNet \citep{He2015deep} and WideResNet \citep{zagoruyko2016wide} architectures.
We keep 5\% of the training datasets for validation and split the remaining data evenly among the clients. Each experiment is performed with multiple random seeds and we report mean and standard deviation.

To ensure a feasible computational study, we set several parameters of the algorithms to fixed values. Specifically, we set the number of clients to 20 and the number of communication rounds to 10. We conduct ablations on these parameters in \cref{sec:ablations}. We set the number of total local epochs each client performs, i.e., training on their private datasets, to a fixed value depending on the dataset used. Similarly, we set a total number of communications and uniformly distribute these communications among the local epochs.
We train clients and server using SGD with weight decay and a linearly decaying learning rate from 0.1 to 0, momentum is set to 0.9. \cref{sec:experimental-details} contains a detailed account of the experimental setup.
Our code is available at \href{https://github.com/ZIB-IOL/FedDistill}{github.com/ZIB-IOL/FedDistill}.
\begin{wrapfigure}{R}{0.45\textwidth}
  \vspace{1em}
  \includegraphics[width=0.45\textwidth]{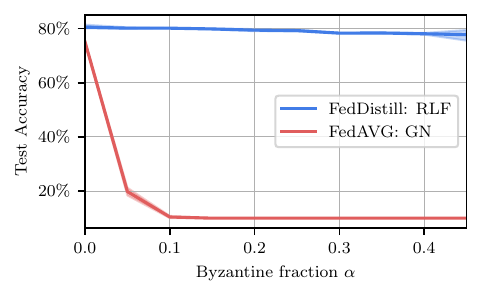}
  \caption{ResNet-18 on CINIC-10: Final test accuracy of \gls{fedavg} and \gls{fd}, varying the fraction of byzantine clients for two naive attacks. \label{fig:fedavg-vs-fd}}
\vspace{-3em}

\end{wrapfigure}

\section{The Byzantine Setting}\label{sec:fd}

Due to the different communication modalities of \gls{fedavg} and \gls{fd}, byzantine clients influence the training in different ways. We refer to these modes of influence as \emph{threat vectors}. In this section, we investigate these threat vectors and their impact, first comparing the byzantine-resilience of vanilla \gls{fedavg} and \gls{fd} as a motivating example.

\cref{fig:fedavg-vs-fd} compares how \gls{fedavg} and \gls{fd} are impacted by byzantine clients when using two naive attacks. We measured the final test accuracy when varying the fraction of byzantine clients $\alphafrac$.
\textit{\gls{fedavg}: GN} denotes \gls{fedavg} where byzantine clients send Gaussian noise instead of the updated model parameters. Analogously, \textit{\gls{fd}: RLF} refers to \gls{fd} where for each sample, all byzantine clients return the same random label. While \gls{fedavg} collapses with only 10\% byzantine clients ($\alphafrac=0.1$), \gls{fd} is remarkably byzantine-resilient, even at $\alphafrac=0.45$. These experiments underscore the inherent robustness of \gls{fd}, which we now discuss from a theoretical perspective. Note that \cref{fig:fedavg-vs-fd} serves as a motivating example, more advanced aggregation schemes for \gls{fedavg} could easily counter the Gaussian noise attack. See \cref{sec:ablations} for a comparison between \gls{fd} and state of the art byzantine-resilient \gls{fedavg} methods.

\paragraph{Threat vectors in \gls{fedavg} and \gls{fd}.} In standard \gls{fedavg}, the updated parameters are aggregated by taking their mean. In this case, we can highlight the threat vector by decomposing the aggregation step (\cref{line:agg-params}) into an honest and a byzantine part as follows,
\begin{equation*}\label{eq:fedavg-vec}
 \wb[i][t+1]\gets \frac{1}{\Nclients }\sum_{i\in \Hclients }\wt[i][t+1]{+}\underbracket{\frac{1}{\Nclients }\sum_{i\in \Bclients }\wt[i][t+1]}_{\mathclap{\text{\gls{fedavg} threat vector}}}.
 \end{equation*}
The byzantine clients can directly influence the new server parameters via their parameters $\wt[i][t+1]$.
Setting the weights of a single client to $\tilde{w}\Nclients  - \sum_{i\neq j}\wt[i][t+1]$ suffices to achieve $\wb[i][t+1]=\tilde{w}$, where $\tilde{w}$ can be chosen freely.
In other words, just one byzantine client is sufficient to arbitrarily perturb the new server parameters.

On the other hand, in \gls{fd} the clients can only alter the server model via their predictions on the public dataset $\Dp$.
The server distills the knowledge from the clients based on their predictions on the public dataset (\cref{line:train-server}). Recall that for a sample $x\in \Dp$, the mean of the clients' predictions in round $t$ is denoted by $\Yb[i][t+1](x)\defi \frac{1}{\Nclients }\sum_{i=1}^\Nclients \Yt[i][t+1](x)$ and the server solves the optimization problem
 \begin{equation}
   \label{eq:p-distill}
 \min_w\sum_{x\in \Dp }\mathcal{L}(h(x,w),\underbracket{\Yb[i][t+1] (x)}_{\mathclap{\text{\gls{fd} threat vector}}})\tag{$\mathcal{P}_{\text{distill}}$},
\end{equation}
where $\mathcal{L}$ is a loss function such as the \gls{cel} or \gls{mse}.
By decomposing $\Yb[i][t+1] (x)$ into a byzantine $\newtarget{def:YB}{\YB[t+1]}(x)\defi\frac{1}{\alphafrac \Nclients }\sum_{i\in \Bclients }\Yt[i][t+1](x)$ and an honest part $\newtarget{def:YH}{\YH}(x) \defi \frac{1}{(1-\alphafrac)\Nclients }\sum_{i\in \Hclients }\Yt[i][t+1](x)$, we can highlight the threat vector more specifically,
 \begin{equation}
 \Yb[i][t+1] (x)= \underbracket{\alphafrac\YB[t+1](x)}_{\mathclap{\text{\gls{fd} threat vector}}}+ (1-\alphafrac)\YH[t+1](x),
\end{equation}
where $\alphafrac$ is the fraction of byzantine clients.

\paragraph{The byzantine resilience of \gls{fd}.}
There are two main reasons why vanilla \gls{fd} is less susceptible to byzantine attacks than vanilla \gls{fedavg}.
First, the influence of byzantine clients on the server parameters in \gls{fd} is indirect. They can only affect the distillation objective \labelcref{eq:p-distill} through their predictions, which must then influences the server's optimization procedure. This contrasts with \gls{fedavg}, where byzantine clients can directly manipulate the server parameters through their parameter updates.
Second, the threat vector in \gls{fd} is confined to the bounded probability simplex $\simplex$, limiting the impact of perturbations, unlike in \gls{fedavg}, where the threat vector operates in the higher-dimensional, unbounded parameter space.

Since the server cannot distinguish between honest and byzantine clients, it has to resort to distilling the knowledge from the aggregated predictions from \emph{all} clients by minimizing \labelcref{eq:p-distill}, despite its true objective being to distill knowledge only from the \emph{honest} clients by solving
 \begin{equation}
   \label{eq:p-distill-og}
\min_w\sum_{x\in \Dp }\mathcal{L}(h(x,w),\YH[t+1](x)).\tag{$\mathcal{P}_{\text{honest}}$}
\end{equation}

We show that if a parameter $w$ is a stationary point of the perturbed objective \labelcref{eq:p-distill}, it also lies within a neighborhood of a stationary point of the true objective \labelcref{eq:p-distill-og}. Therefore, running stochastic gradient descent on \labelcref{eq:p-distill} still enables the server to converge to a neighborhood of the solution it would reach with \labelcref{eq:p-distill-og}.
The key intuition behind this result is that \labelcref{eq:p-distill} and \labelcref{eq:p-distill-og} differ only in their target labels, allowing us to demonstrate that the difference between their computed gradients scales linearly with the distance between $\YH$ and $\YB$:
\begin{equation*}
 \norm{\nabla_w\mathcal{L}(w,\YH[t+1])-\nabla_w\mathcal{L}(w,\Yb[i][t+1] )}\le C\alphafrac \norm{\YH[t+1]-\YB[t+1]}\le \sqrt{2}C\alphafrac,
\end{equation*}
where we used the shorthand $\mathcal{L}(w,Y)\defi \mathcal{L}(h(w,x),Y(x))$.
Here, $C$ is a constant independent of the prediction and depending only on the parameter $w$ and the data $x\in \Dp$.
Note that the error grows proportionally to the fraction of byzantine clients $\alphafrac$ and the distance between the aggregated honest and the byzantine predictions. Furthermore, since both predictions lie in $\simplex$, their distance is inherently bounded by $\sqrt{2}$. See \cref{sec:bound-byzant-error} for the full statement and proof of the following informal theorem.

\begin{theorem}[Informal]
If $\tilde{w}$ is a stationary point of \labelcref{eq:p-distill}, then it is also an $\mathcal{O}(C^2\alphafrac^2)$-approximate stationary point of \labelcref{eq:p-distill-og}, where $C>0$ is a constant independent of the client predictions.
 Further, in expectation, running \gls{sgd} on \labelcref{eq:p-distill} to achieve an $\varepsilon$-approximate stationary point yields an $\mathcal{O}(\varepsilon + C^2 \alphafrac^2)$-approximate stationary point of \labelcref{eq:p-distill-og}.
\end{theorem}

\section{Attacking and Defending KD-based FL}\label{sec:attacks-defences}
In the previous section, we compared standard variants of \gls{fd} and \gls{fedavg} in terms of their susceptibility to byzantine attacks. Building on our insights into the threat vectors, this section focuses on developing strategies to attack and defend \gls{kd}-based \gls{fl}. We first introduce two effective attacks in \cref{sec:attack}, specifically tailored to exploit the threat vector of \gls{fd}. To counter these attacks, we propose a novel defence mechanism in \cref{sec:filter-exp}, which significantly enhances byzantine-resilience compared to prior methods. Finally, \cref{sec:lied} presents a general framework for obfuscating byzantine attacks on \gls{kd}-based \gls{fl}, making them much harder to detect. In the following, we omit the time superscripts on the prediction to simplify the notation.
\subsection{Attacking FedDistill: LMA \& CPA}\label{sec:attack}

\begin{figure}[h]
  \centering
   \resizebox{\columnwidth}{!}{
 \newcommand*{\colorn}[2]{\cellcolor{cb-clay!#1} #2}

\begin{tikzpicture}

\setlength{\tabcolsep}{1pt}.

\filldraw [fill=black!10, draw=black!10] (-4,1) rectangle (2.9,-6);

\node at (-2.6,0.7) {\textsc{Honest mean}};
 \begin{scope}[shift={(0,-1)}]
\node (table1) at (-3, 0) {
\begin{tabular}{c|c|}
\multicolumn{1}{c}{}&\multicolumn{1}{c}{$Y_1$}\\ \hhline{~-}
 ship  & \colorn{82}{0.82}\\ \cline{2-2}
 dog  &  \colorn{4}{0.04} \\ \cline{2-2}
 cat  & \colorn{14}{0.14} \\ \cline{2-2}
\end{tabular}
};

\node (table2) at (-1, 0) {
\begin{tabular}{|c|}
\multicolumn{1}{c}{$Y_2$}\\\hhline{-}
\colorn{73}{0.73} \\ \cline{1-1}
\colorn{21}{0.21}\\ \cline{1-1}
\colorn{6}{0.06}\\ \cline{1-1}
\end{tabular}
};

\node (table3) at (0.5, 0) {
\begin{tabular}{|c|}
\multicolumn{1}{c}{$Y_3$}\\\hhline{-}
\colorn{92}{0.92}\\ \cline{1-1}
\colorn{4}{0.04} \\ \cline{1-1}
\colorn{4}{0.04} \\ \cline{1-1}
\end{tabular}
};

\node (table4) at (2, 0) {
\begin{tabular}{|c|}
\multicolumn{1}{c}{$Y_4$}\\\hhline{-}
\colorn{61}{0.61} \\ \cline{1-1}
\colorn{31}{0.31}\\ \cline{1-1}
\colorn{8}{0.08} \\\cline{1-1}
\end{tabular}
};

\node (meanTable) at (-0.55, -3) {
\begin{tabular}{c|c|}
\multicolumn{1}{c}{}&\multicolumn{1}{c}{$\bar{Y}_{\mathcal{H}}$}\\\hhline{~-}
 ship  & \colorn{77}{0.77} \\ \cline{2-2}
 dog  & \colorn{15}{0.15} \\ \cline{2-2}
 cat  & \colorn{8}{0.08} \\ \cline{2-2}
\end{tabular}
};

\draw (-2.7,-0.9) |- (-0.25,-1.5) |- (-0.25,-2);
\draw (-1,-0.9) |- (-0.25,-1.5) |- (-0.25,-2);
\draw (0.5,-0.9) |- (-0.25,-1.5) |- (-0.25,-2);
\draw[-<] (2,-0.9) |- (-0.25,-1.5) |- (-0.25,-2);
\node[draw,fill=white,inner sep=1.2pt] at (-0.25,-1.5) {$\frac{1}{n}\sum_{i=1}^{n}Y_i$};
\end{scope}
\filldraw [fill=black!10, draw=black!10] (3.5,1) rectangle (12,-2);
    \node at (4.1,0.7) {\textsc{lma}};
  \begin{scope}[shift={(6,0)}]
\node (honestMeanTable) at (0, 0) {
\begin{tabular}{c|c|}
\multicolumn{1}{c}{}&\multicolumn{1}{c}{$\bar{Y}_{\mathcal{H}}$}\\\hhline{~-}
 ship  & \colorn{77}{0.77} \\ \cline{2-2}
 dog  & \colorn{15}{0.15} \\ \cline{2-2}
 cat  & \colorn{8}{0.08} \\ \cline{2-2}
\end{tabular}
};

\draw[cb-burgundy, thick] (-0.1, -0.41) rectangle (0.76, -0.73);
\node (lmaTable) at (2, 0) {
\begin{tabular}{|c|}
\multicolumn{1}{c}{$\bar{Y}_{\mathcal{B}}$}\\\hhline{-}
 \colorn{0}{0} \\ \cline{1-1}
 \colorn{0}{0} \\ \cline{1-1}
 \colorn{100}{1} \\ \cline{1-1}
\end{tabular}
};
\node (fullMeanTable) at (4, 0) {
\begin{tabular}{|c|}
\multicolumn{1}{c}{$\bar{Y}$}\\\hhline{-}
\colorn{44}{0.44}  \\ \cline{1-1}
 \colorn{9}{0.09}  \\\cline{1-1}
  \colorn{47}{0.47}\\ \cline{1-1}
\end{tabular}
};

\draw (0.3,-0.9) |- (4,-1.5);
\draw (2,-0.9) |- (4,-1.5);
\draw[,->] (4,-1.5) -| (4,-0.9);

\node[draw,fill=white,inner sep=1.2pt] at (0.3,-1.2) {$1{-}\alpha$};
\node[draw,fill=white,inner sep=1.2pt] at (2,-1.2) {$\alpha$};
  \end{scope}

\filldraw [fill=black!10, draw=black!10] (3.5,-2.5) rectangle (12,-6);
    \node at (4,-2.8) {\textsc{cpa}};
 \begin{scope}[shift={(4.4,-4)}]
\node (meanTable) at (0, 0) {
\begin{tabular}{c|c|}
\multicolumn{1}{c}{}&\multicolumn{1}{c}{$\bar{Y}_{\mathcal{H}}$}\\\hhline{~-}
 ship  & \colorn{77}{0.77} \\ \cline{2-2}
 dog  & \colorn{15}{0.15} \\ \cline{2-2}
 cat  & \colorn{8}{0.08} \\ \cline{2-2}
\end{tabular}

};

\draw[cb-burgundy, thick] (-0.1, 0.38) rectangle (0.76, 0.02);

\node (cMatrix) at (2.5,-0.25) {
\begin{tabular}{c|ccc|}
\multicolumn{1}{c}{}&&$S$&\multicolumn{1}{c}{}\\\hhline{~---}
 ship  & \colorn{100}{1}& \colorn{20}{0.2}& \colorn{30}{0.3}\\
 dog  & \colorn{20}{0.2} & \colorn{100}{1}& \colorn{80}{0.8} \\
 cat  & \colorn{30}{0.3}& \colorn{80}{0.8}& \colorn{100}{1} \\\hhline{~---}
 \multicolumn{1}{c}{}&\multicolumn{1}{c}{ship}&\multicolumn{1}{c}{dog}&\multicolumn{1}{c}{cat}\\
\end{tabular}
};

\draw[cb-burgundy, thick] (2.5, 0.38) rectangle (3.21, -0.07);
\draw[cb-burgundy, thick] (1.86, 0.33) rectangle (3.8, -0.02);
\node (lmaTable) at (5, 0) {
\begin{tabular}{|c|}
\multicolumn{1}{c}{$\bar{Y}_{\mathcal{B}}$}\\\hhline{-}
 \colorn{0}{0} \\ \cline{1-1}
 \colorn{100}{1} \\ \cline{1-1}
\colorn{0}{0}\\ \cline{1-1}
\end{tabular}
};

\node (fullMeanTable) at (6.5, 0) {
\begin{tabular}{|c|}
\multicolumn{1}{c}{$\bar{Y}$}\\\hhline{-}
  \colorn{44}{0.44}\\ \cline{1-1}
  \colorn{51}{0.51}\\ \cline{1-1}
  \colorn{9}{0.05}\\ \cline{1-1}
\end{tabular}
};

\draw (0.3,-0.9) |- (6.5,-1.5);
\draw (5,-0.9) |- (6.5,-1.5);
\draw[->] (5,-1.5) -| (6.5,-0.9);

\node[draw,fill=white,inner sep=1.2pt] at (0.3,-1.2) {$1{-}\alpha$};
\node[draw,fill=white,inner sep=1.2pt] at (5,-1.2) {$\alpha$};

  \end{scope}
\end{tikzpicture}}
  \caption{Attack procedures for a three-class classification problem with four honest and three byzantine clients, i.e., $\alphafrac=3/7$. The left part of the figure shows the computation of the honest mean. \lma{} (upper right) assigns probability one to the least likely class based on the honest mean $\YH$ and \cpa{} (lower right) assigns probability one to the class that is least similar to the most likely class of $\YH$, according to the similarity matrix $S$. Note that all computations are done per sample $x$, which was omitted from the notation for legibility.}
  \label{fig:attack-img}
\end{figure}

In \gls{fd}, byzantine clients can only degrade the server model through their predictions. Since the threat vector is confined to the $c$-dimensional probability simplex $\simplex$ rather than the parameter space, many attacks from the byzantine \gls{fedavg} literature, such as gradient sign flipping, cannot be applied to \gls{fd}. Previous work on byzantine \gls{fd} has been limited to basic label-flipping attacks.

While the server trains on the aggregated predictions from \emph{all} clients $\Yb(\Dp)$, its true objective is to match the mean predictions $\YH(\Dp)$ made by the \emph{honest} clients.
Hence, one can interpret the aim of the byzantines as selecting their predictions $\YB(\Dp)$ to maximize the disparity between the aggregated predictions $\Yb(\Dp)$ and the predictions of the honest clients $\YH(\Dp)$.

\textbf{Loss Maximization Attack (\newtarget{def:lma}{\lma{}}).}
One way to measure the difference between $\Yb(\Dp)$ and $\YH(\Dp)$ is via the loss function $\mathcal{L}$ that the server aims to minimize. We propose \lma{}, which chooses byzantine predictions that \emph{maximize} this loss for each sample $x\in \Dp$. Specifically, for each sample, \lma{} solves the optimization problem
\begin{equation}\label{eq:l-max}
  \smashoperator[l]{\max_{\YB(x) \in \simplex}} \mathcal{L}\left((1-\alphafrac)\YH(x)+\alphafrac \YB(x),\YH(x)\right).
\end{equation}

For typical loss functions, \lma{} corresponds to choosing the least likely class according to the aggregated predictions of all \emph{honest} clients $\YH(x)$, see \cref{lem:lma} in \cref{sec:att-opt}.
For each $x\in \Dp$, the byzantine clients compute the $\argmin$ of the mean honest prediction $\YH(x)$, corresponding to the least likely class assigned to $x$, and then assign probability 1 to this class, i.e., the byzantine prediction becomes $\mathds{1}_j$, where $j\in \argmin \YH(x)$.
An example is shown in the top right of \cref{fig:attack-img}. According to $\YH$, the least likely class is \emph{cat}, hence all byzantine clients will assign probability one to \emph{cat}. As a result, the mean of all clients $\Yb $ predicts the class \emph{cat}.

\textbf{Class Prior Attack (\newtarget{def:cpa}{\cpa{}}).}
Another way to measure how different two predictions are is to take information about the relationships between classes into account.
For example, the class \emph{dog} is more similar to the class \emph{cat} than to the class \emph{ship} . The second attack we propose, \cpa{}, uses information about such inter-class relationships to select the most misleading predictions. To measure the similarity between the $c$ classes, \cpa{} requires access to a similarity matrix $S\in \mathbb{R}^{c\times c}$. We obtain this similarity matrix by computing the sample covariance matrix of the predictions $Y(\Dp)$ of a pretrained model, i.e.,
\begin{equation*}
 S  =  \frac{1}{\vert \Dp\vert}\sum_{x\in \Dp}(Y(x)-\Yb )(Y(x)-\Yb )^T,\quad \Yb \gets  \frac{1}{\vert \Dp\vert}\sum_{x\in \Dp}Y(x).
\end{equation*}
Then, for each sample $x$, \cpa{} computes the class which is least similar to the class predicted by the honest clients per $\YH(x)$, where similarity is measured using the similarity matrix $S$. The byzantine clients now assign probability 1 to that class, i.e., when $i=\argmin \YH(x)$, the byzantine clients predict $\mathds{1}_j$ where $j\gets \argmin_j S_{ij}$ (see \cref{fig:attack-img}).
An example is shown in the bottom right of \cref{fig:attack-img}. According to $\YH$, the most likely class is \emph{ship}, and based on $S$, \emph{dog} is the least similar class to \emph{ship}. Hence all byzantine clients will assign probability one to \emph{dog}. As a result, the mean of all clients $\Yb $ predicts the class \emph{dog}.

Although using a pretrained model is not realistic in practice, we employ it to better understand \cpa{}'s effectiveness and potential capabilities. In a real-world setting, attackers could instead use the server model from previous rounds or leverage their prior knowledge about class relationships to construct the similarity matrix.

\begin{wrapfigure}{R}{0.45\textwidth}
   \includegraphics[width=0.42\textwidth]{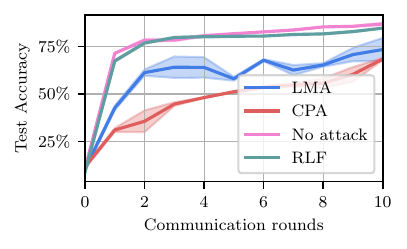}
  \caption{ResNet-18 on CIFAR-10: Test accuracy evolution over communication rounds when attacking \gls{fd} with 9 byzantine out of overall 20 clients. \label{fig:fd-attacks}}
\vspace{-2em}
\end{wrapfigure}

\cref{fig:fd-attacks} illustrates the effects of \lma{} and \cpa{} over multiple communication rounds, where we compare to the naive baseline \gls{rlf} attack, where all byzantine clients assign the same random label to each sample. Both \cpa{} and \lma{} are significantly more effective than \gls{rlf} and heavily impact the final test accuracy. Notably, while the final test accuracy of both methods is similar, \cpa{} has a stronger effect in the early rounds. 

\subsection{Defending FedDistill: ExpGuard}
\label{sec:filter-exp}
Unlike the baseline attack \gls{rlf}, both \lma{} and \cpa{} can severely disturb the training process. In order to improve the resilience against such attacks, we investigate replacing the mean aggregation of the clients' predictions with more robust aggregation methods.

Let us first consider existing defence approaches, namely \newtarget{def:cronus}{\cronus} \citep{chang_cronus_2019}, a method based on filtering-based approaches from the high-dimensional robust statistics literature \citep{diakonikolas2018Being} and the \gls{gm}, a multivariate generalization of the median defined as $\operatorname{\gls{gm}}(Y_1,\ldots, Y_\Nclients )\defi\argmin_{y\in \simplex} \sum_{i=1}^\Nclients \Vert Y_i-y\Vert_2$. Some commonly used robust aggregation methods further include the coordinate-wise trimmed mean or the coordinate-wise median; however, these cannot be used to aggregate predictions, as the aggregated vectors do not necessarily reside in the probability simplex, see \cref{sec:counterexample} for more details.

Filtering-based approaches compute the empirical covariance matrix of the data, obtain the leading eigenvector of that matrix and then project the data onto the subspace spanned by the eigenvector. The intuition behind these approaches is that if an adversary wants to perturb the mean by modifying a fraction of the data without creating obvious outliers, it has to set the datapoints relatively close to the benign data but all biased in a similar direction, such that the perturbation caused by the individual datapoints do not counteract each other. This causes the variance along this direction to increase.

In the setting discussed in \citet{diakonikolas2018Being}, outliers along this eigenvector are then filtered based on a specific rule relying on statistical assumptions. Since these assumptions do not hold in the \gls{fd} setting, \cronus{} relies on a simple heuristic: it filters out the 25\% worst outliers, then recomputes the empirical covariance matrix and leading eigenvector based on the remaining 75\% of the data and then filters out another 25\% of the predictions, such that 50\% of the original predictions remain.

Neither \cronus{} nor \gls{gm} can be computed in closed-form and require numerical algorithms. Applying them to aggregated parameters in Byzantine \gls{fedavg} can lead to a large computational overhead. However, for \gls{fd}, we only aggregate predictions, which are of much smaller dimensions, making their computational requirements negligible compared to training (cf. \cref{tab:wallclock-aggregation}, \cref{sec:ablations}).
\begin{wrapfigure}{R}{0.49\textwidth}
\begin{minipage}{0.49\textwidth}
\begin{algorithm}[H]
  \caption{ExpGuard\label{alg:exp}}
  \begin{algorithmic}[1]
    \State \textbf{Input:} Predictions $\Yt[i][t+1](\Dp)$ and weights $\newtarget{def:pt}{\pt[i]}$ for all $i\in \Nclients $, aggregation method \textsc{Agg}.
    \vspace{0.2em}\hrule\vspace{0.2em}
    \vspace{0.2\baselineskip}
      \State Compute outlier scores: \label{line:outlier-scores}
      \Statex $\sigma_i\gets \textsc{Agg}(\Yt[i][t+1](\Dp)), \forall i\in [n]$
      \State Update weights: \label{line:update-weights}
      \Statex$\pt[i][t+1]\gets \pt[i]\exp(-\sigma_i), \forall i\in [n]$
      \State Compute weighted sum for all $x \in \Dp$:\label{line:weighted-sum}
      \Statex$\Yb[i][t+1](x)\gets\frac{1}{\sum_{j=1}^n\pt[j][t+1]}\sum_{i=1}^{\Nclients }\pt[i][t+1]\Yt[i][t+1](x)$
    \vspace{0.5em}\hrule\vspace{0.2em}
    \State \textbf{Output:} $\Yb[i][t+1](\Dp)$, $\pt[i][t+1], \forall i\in [\Nclients ]$
  \end{algorithmic}
\end{algorithm}
\end{minipage}
\end{wrapfigure}
\paragraph{ExpGuard.}
Both \gls{gm} and \cronus{} only use information from the current datapoint $x\in \Dp$. To further improve the resilience of \gls{fd}, we propose \expguard{}, which incorporates additional information available to the server from the predictions on other samples \emph{within} each communication round as well as information about the predictions in \emph{past} communication rounds.

ExpGuard, outlined in \cref{alg:exp}, is a meta-algorithm that can be used to enhance any robust aggregation method. It works as follows: a robust aggregation method \textsc{Agg} is used to compute an outlier score $\sigma_i$ for each client based on its prediction on the whole dataset $\Dp$.
Then, these outlier scores are used to update the weight $\pt[i]$ for each client based on the exponential weights algorithm \citep{littlestone1994weighted} such that clients with low outlier scores $\sigma_i$ are assigned higher weight and the weight of clients with high outlier scores $\sigma_{i}$ is reduced.

For any robust aggregation method, the outlier score can be computed as the sum of the distances between the clients prediction and the robust estimate for each $x\in \Dp$. However, for filtering-based methods, we can skip the filtering, i.e., deciding which samples should be considered outliers, and simply use the distance of the clients prediction along the eigenvector as the outlier score. We refer to this approach as \gls{egf}. While \expguard{} substantially improves the byzantine resilience of all aggregation methods we tested, \gls{egf} leads to the best results, see \cref{tab:expdef+x} in \cref{sec:ablations}. Therefore, we only show the results for \gls{egf} in \cref{tab:attack-def}.

\cref{tab:attack-def} presents a comparison of robust aggregation methods against our proposed attacks and the \acrlong{rlf} baseline. The results show that robust aggregation methods improve robustness over simple mean aggregation and \gls{egf} consistently outperforms both \gls{gm} and \cronus{} across all datasets, highlighting the benefits of utilizing all available information. Interestingly, \gls{gm} generally outperforms \cronus{} while \expguard{}+\gls{gm} is less byzantine-resilient than \gls{egf}, see \cref{tab:expdef+x} in \cref{sec:ablations}. This indicates that the poor performance of \cronus{} might be due to the filtering heuristic employed.
\algrenewcommand\algorithmicindent{1.0em}

\begin{table}[h]
\caption{\gls{fd}: 20 clients of which nine are byzantine ($\alphafrac=0.45$). Final test accuracy averaged over multiple runs with standard deviation for different attacks and defences. BA refers to the baseline accuracy, i.e., the final accuracy of \gls{fd} if all clients are honest.\label{tab:attack-def}}
\vspace{1em}
 \resizebox{\linewidth}{!}{%
\begin{tabular}{@{}lcccc|cccc@{}}
 &\multicolumn{4}{c}{CINIC-10 (ResNet-18), BA=80.2\small{\textpm 0.1}}& \multicolumn{4}{c}{CIFAR-10 (ResNet-18), BA=87.7\small{\textpm 1.2}}\\
\toprule
 & Mean & \gls{gm} &\cronus{} & \gls{egf} & Mean & \gls{gm} &\cronus{} & \gls{egf}\\ \midrule
  \gls{rlf}&76.9\small{\textpm 0.4}& 79.3\small{\textpm 0.3}&76.6\small{\textpm 0.0}& 78.8\small{\textpm 0.1}&84.6\small{\textpm 0.1}& 85.4\small{\textpm 0.6}& 84.7\small{\textpm 0.3}& 85.4\small{\textpm 0.0}\\
\lma{}&54.6\small{\textpm 1.2}& 75.0\small{\textpm 0.6}&71.1\small{\textpm 1.7}& 77.4\small{\textpm 1.1}&73.4\small{\textpm 8.6}& 83.3\small{\textpm 0.2}&80.6\small{\textpm 2.3}& 85.4\small{\textpm 0.1}\\
\cpa{} &45.9\small{\textpm 0.4}& 71.2\small{\textpm 5.2}&65.9\small{\textpm 3.3}& 79.2\small{\textpm 0.2}&68.4\small{\textpm 0.8}& 78.4\small{\textpm 0.9}&74.5\small{\textpm 0.6}& 85.5\small{\textpm 0.8}\\\bottomrule\vspace{0.1em}
\end{tabular}}
 \resizebox{\linewidth}{!}{%
\begin{tabular}{@{}lcccc|cccc@{}}
 &\multicolumn{4}{c}{CIFAR-100 (WideResNet-28), BA=66.8\small{\textpm 0.5}}& \multicolumn{4}{c}{Clothing1M (ResNet-50), BA=69.0\small{\textpm 0.3}}\\
\toprule
 & Mean & \gls{gm} &\cronus{} & \gls{egf} & Mean & \gls{gm} &\cronus{} & \gls{egf}\\ \midrule
  \gls{rlf}&65.2\small{\textpm 0.7}&65.2\small{\textpm 0.3}&44.3\small{\textpm 1.7}&63.9\small{\textpm 0.6}&69.4\small{\textpm 1.2}&68.7\small{\textpm 0.8}&68.6\small{\textpm 0.4}&68.7\small{\textpm 1.1}\\
\lma{}&41.8\small{\textpm 4.4}&51.3\small{\textpm 0.1}&44.6\small{\textpm 0.2}&57.2\small{\textpm 1.2}&40.3\small{\textpm 3.3}&58.3\small{\textpm 0.7}&61.4\small{\textpm 0.5 }&68.3\small{\textpm 0.7}\\
\cpa{} &43.3\small{\textpm 1.2}&56.7\small{\textpm 0.9}&55.3\small{\textpm 0.3}&62.1\small{\textpm 1.4}&33.7\small{\textpm 2.7}&58.4\small{\textpm 0.3}&43.9\small{\textpm 12.9}&68.5\small{\textpm 1.0 }\\\bottomrule
\end{tabular}}
\end{table}

\subsection{HIPS}\label{sec:lied}
The reason \expguard{} is so effective at mitigating the attacks in our experiments is that these attacks aim to maximally disrupt the learning process without considering detectability. However, in the presence of defences, a tradeoff arises between an attack’s strength and its detectability. Even the most potent attack is ineffective if consistently detected and filtered out by a defence mechanism.

To address this tradeoff, we propose \acrlong{hips} (\gls{hips}), a method that obfuscates attacks to make them harder to detect. The key idea behind \gls{hips} is that small perturbations to proper predictions can sufficiently disrupt the learning process while avoiding detection, as they are less distinguishable from honest predictions. We note that similar concepts have been explored in the byzantine \gls{fedavg} literature \citep{baruch2019little,xie_fall_2020}, but as these approaches where designed for the parameter space, they do not apply to \gls{fd}.

\newcommand{\clay}[1]{%
  \begingroup
  \setlength{\fboxsep}{0pt}%
  \colorbox{cb-clay!50}{#1}%
  \endgroup
}
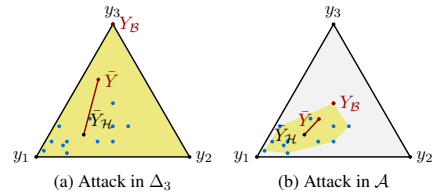
\begin{wrapfigure}{R}{0.45\textwidth}
  \resizebox{0.42\textwidth}{!}{

\begin{tikzpicture}[scale=1.5]
\newcommand\ps{0.5pt} 
\newcommand\bps{0.7pt} 
\coordinate (A) at (0,0);
\coordinate (B) at (2,0);
\coordinate (C) at (1,{sqrt(3)});

\fill[cb-clay!50] (A) -- (B) -- (C) -- cycle;
\fill[black] (A) circle (\bps);
\fill[black] (B) circle (\bps);
\fill[black] (C) circle (\bps);
\draw[line width=0.8pt] (A) -- (B) -- (C) -- cycle;

\node[anchor=east] at (A) {$y_1$};
\node[anchor=west] at (B) {$y_2$};
\node[anchor=south] at (C) {$y_3$};
\coordinate (y1) at (0.1,0.08);
\coordinate (y2) at (0.4,0.05);
\coordinate (y3) at (1,0.2);
\coordinate (y4) at (1,0.7);
\coordinate (y5) at (0.3,0.4);
\coordinate (y6) at (1.2,0.4);
\coordinate (y7) at (1,0.4);
\coordinate (y8) at (0.8,0.2);
\coordinate (y9) at (0.2,0.2);
\coordinate (y10) at (0.7,0.5);
\coordinate (y11) at (0.4,0.2);
\coordinate (y12) at (0.35,0.15);
\foreach \point in {(y1), (y2), (y3), (y4), (y5), (y6), (y7), (y8), (y9), (y10), (y11), (y12)} {
  \fill[cb-blue] \point circle (\bps) node[anchor=south] at \point {};
}

\coordinate (yhmean) at (0.62,0.29);
\node[anchor=south west] at (yhmean) {$\color{black}\bar{Y}_{\mathcal{H}}$};

\coordinate (ylmax) at (C);
\node[anchor= west] at (ylmax) {$\color{cb-burgundy}Y_{\mathcal{B}}$};
\fill[cb-burgundy] (ylmax) circle (\bps);

\coordinate (ylmaxh) at (0.81, 1.01);
\node[anchor=west] at (ylmaxh) {$\color{cb-burgundy}\bar{Y}$};
\fill[cb-burgundy] (ylmaxh) circle (\bps);
\draw[cb-burgundy, line width=0.8pt] (yhmean) -- (ylmaxh);

\fill[black] (yhmean) circle (\bps);

\node[below] at (current bounding box.south) {(a) Attack in $\Delta_{3}$};

\end{tikzpicture}


\begin{tikzpicture}[scale=1.5]
\newcommand\ps{0.3pt} 
\newcommand\bps{0.7pt} 
\coordinate (A) at (0,0);
\coordinate (B) at (2,0);
\coordinate (C) at (1,{sqrt(3)});

\fill[gray!10] (A) -- (B) -- (C) -- cycle;
\fill[black] (A) circle (\bps);
\fill[black] (B) circle (\bps);
\fill[black] (C) circle (\bps);
\draw[line width=0.8pt] (A) -- (B) -- (C) -- cycle;

\node[anchor=east] at (A) {$y_1$};
\node[anchor=west] at (B) {$y_2$};
\node[anchor=south] at (C) {$y_3$};

\coordinate (y1) at (0.1,0.08);
\coordinate (y2) at (0.4,0.05);
\coordinate (y3) at (1,0.2);
\coordinate (y4) at (1,0.7);
\coordinate (y5) at (0.3,0.4);
\coordinate (y6) at (1.2,0.4);
\coordinate (y7) at (1,0.4);
\coordinate (y8) at (0.8,0.2);
\coordinate (y9) at (0.2,0.2);
\coordinate (y10) at (0.7,0.5);
\coordinate (y11) at (0.4,0.2);
\coordinate (y12) at (0.35,0.15);

\fill[cb-clay!50] (y1) -- (y2) -- (y3) -- (y6) -- (y4) -- (y5) -- cycle;
\foreach \point in {(y1), (y2), (y3), (y4), (y5), (y6), (y7), (y8), (y9), (y10), (y11), (y12)} {
  \fill[cb-blue] \point circle (\bps) node[anchor=south] at \point {};
}



\coordinate (yhmean) at (0.62,0.29);
\node[anchor=east] at (yhmean) {$\color{black}\bar{Y}_{\mathcal{H}}$};

\coordinate (ylmax) at (y4);
\node[anchor=west] at (y4) {$\color{cb-burgundy}{Y_{\mathcal{B}}}$};
\fill[cb-burgundy] (y4) circle (\bps);

\coordinate (ylmaxh) at (0.81,0.495);
\node[anchor=east] at (ylmaxh) {$\color{cb-burgundy}{\bar{Y}}$};
\fill[cb-burgundy] (ylmaxh) circle (\bps);
\draw[cb-burgundy,line width=0.8pt] (yhmean) -- (ylmaxh);

\fill[black] (yhmean) circle (\bps);

\node[below] at (current bounding box.south) {(b) Attack in $\mathcal{A}$};
\end{tikzpicture}

  \caption{Attack spaces in $\Delta_3$: The \textcolor{cb-blue}{blue dots} represent the predictions by the honest clients and $\YH$ is their mean. The attack space is \clay{highlighted.} $\color{cb-burgundy}\YB$ is the byzantine prediction, and $\color{cb-burgundy}\Yb $ denotes the mean of \emph{all} clients for $\alphafrac=0.5$. The {\color{cb-burgundy}red line} joining them represents the mean $\color{cb-burgundy}\Yb $ corresponding to different $\alphafrac\in [0,0.5]$. \label{fig:simplex}}
  \vspace{-1em}
\end{wrapfigure}

For each sample $x$, \gls{hips} operates by reducing the attack space from $\simplex$ to the convex hull of the honest predictions ${\mathcal{A}(x)\defi \operatorname{conv}\{Y_i(x): i\in \Hclients \}}$ such that the predictions of the byzantine clients do not differ much from the honest predictions.
We illustrate this concept in \cref{fig:simplex} for a three-class classification problem.
In \cref{fig:simplex}a, the byzantine clients can choose any point in $\Delta_3$, whereas in \cref{fig:simplex}b, due to the constraints imposed by \gls{hips}, the attack space is reduced to $\mathcal{A}$.
Given that \gls{hips} introduces additional constraints to the attack space, it becomes necessary to modify the attacks accordingly. In the following, we omit the dependence on $x$ for legibility. We begin by extending \cpa{} with \gls{hips}.

\textbf{Extending \cpa{} with \gls{hips}.} Recall that for each sample $x$, \cpa{} computes the class least similar to the class predicted by the honest clients, measuring similarity using the similarity matrix $S$. Let $S_i$ denote the $i$-th row of $S$, i.e., $S_i$ is a vector containing the similarity of $i$ to all other classes. If $i$ is the class predicted according to the mean honest prediction $\YH(x)$, then \cpa{} can be rephrased as the solution to $\min_{y\in \simplex}y^TS_i$. Applying \gls{hips} now requires restricting the constraints from $\simplex$ to $\mathcal{A}$.

Unlike \cpa{}, this attack can no longer be computed in closed form. However, since $\min_{y \in \mathcal{A}} y^T S_i$ is a linear program, it is sufficient to consider the extreme points of $\mathcal{A}$, which correspond to the predictions of each honest client. Thus, the solution can still be efficiently obtained by evaluating the function for each honest client’s prediction and selecting the one with the smallest value. Intuitively, a prediction minimizes the objective by assigning low probability to classes similar to $i$ and high probability to those dissimilar to $i$.

\textbf{Extending \lma{} with \gls{hips}.}
Similarly, restricting the attack space of \lma{} to $\mathcal{A}$ prevents us from directly solving \labelcref{eq:l-max}. However, we can show in \cref{cor:lied-lma}, \cref{sec:att-opt} that \labelcref{eq:l-max} admits a solution at the extreme points of $\mathcal{A}$, which correspond to the honest predictions. Thus, we can compute the loss for each honest prediction and select the one that maximizes \labelcref{eq:l-max}.

\begin{table}[h]
\caption{\gls{fd}: 20 clients of which nine are byzantine ($\alphafrac=0.45$). We report the final test accuracy averaged over multiple runs with standard deviation for different attacks (corresponding to rows) and defences (corresponding to columns). BA refers to the baseline accuracy, i.e., the final accuracy of \gls{fd} if all clients are honest. Recall that the goal of \gls{hips} is to disrupt the learning process, hence lower accuracy  is better. \label{tab:attack-def-hips}}
\vspace{1em}
 \resizebox{\linewidth}{!}{%
\begin{tabular}{@{}lcccc|cccc@{}}
 &\multicolumn{4}{c}{CINIC-10 (ResNet-18), BA=80.2\small{\textpm 0.1}}& \multicolumn{4}{c}{CIFAR-100 (WideResNet-28), BA=66.8\small{\textpm 0.5}}\\
\toprule
 & Mean & \gls{gm} &\cronus{} & \gls{egf} & Mean & \gls{gm} &\cronus{} & \gls{egf} \\\midrule
\lma{}&54.6\small{\textpm 1.2}& 75.0\small{\textpm 0.6}&71.1\small{\textpm 1.7}& 77.4\small{\textpm 1.1}&41.8\small{\textpm 4.4}&51.3\small{\textpm 0.1}&44.6\small{\textpm 0.2}&57.2\small{\textpm 1.2}\\
\cpa{} &45.9\small{\textpm 0.4}& 71.2\small{\textpm 5.2}&65.9\small{\textpm 3.3}& 79.2\small{\textpm 0.2}&43.3\small{\textpm 1.2}&56.7\small{\textpm 0.9}&55.3\small{\textpm 0.3}&62.1\small{\textpm 1.4}\\\midrule
\glsxtrshort{hips}+\lma{}& 75.3\small{\textpm 0.1}&68.7\small{\textpm 0.1}&67.7\small{\textpm 1.0}& 73.3\small{\textpm 0.9}&50.3\small{\textpm 3.3}&34.3\small{\textpm 0.4}&34.4\small{\textpm 2.8}&49.3\small{\textpm 0.5}\\
\glsxtrshort{hips}+\cpa{} & 74.2\small{\textpm 1.1}&65.8\small{\textpm 0.5}&66.4\small{\textpm 0.1}& 72.9\small{\textpm 0.7}&47.2\small{\textpm 4.2}&32.6\small{\textpm 4.5}&28.1\small{\textpm 0.5}&46.4\small{\textpm 0.0}\\\bottomrule
\end{tabular}}
\end{table}
\cref{tab:attack-def-hips} shows that \gls{hips} \emph{improves} the effectiveness of \cpa{} and \lma{} against \gls{egf}, \gls{gm} and \cronus{} but \emph{reduces} its effectiveness against the mean. Recall that the objective of \gls{hips} is to reduce the strength of attacks in order to make it harder to detect. Therefore, the fact that \gls{hips} reduces the effectiveness of the attacks against the mean is to be expected as in the absence of a defence mechanism, making an attack less detectable does not have any advantage.

In consequence, the mean is slightly more resilient to \gls{hips} attacks than \gls{egf}, while \gls{egf} is more resilient towards non-\gls{hips} attacks by a large margin. Since we don't know in advance which attack strategy byzantine clients will use, the best defence is the one that experiences the smallest drop in accuracy against the most effective attack. This ensures that, regardless of which attack is employed, the defence performs reliably and minimizes the worst possible impact on model accuracy

\section{Discussion}
\label{sec:discussion}
We analyzed to what extent byzantine clients can perturb \gls{kd}-based \gls{fl}, exemplified by the prototypical algorithm \gls{fd}, allowing us to better understand and characterize its byzantine resilience.

By proposing two effective new attacks specifically for \gls{fd}, namely \lma{} and \cpa{}, we highlighted vulnerabilities of \gls{fd}. We remark that while \gls{fedavg} attacks can not be directly applied to \gls{fd}, attacks for \gls{fd} can in turn always be applied to \gls{fedavg} by computing gradients based on the byzantine predictions. In fact, attacks based on computing gradients based on simple label-flipping attacks have been considered in the \gls{fedavg} literature \citep{allen-zhu_byzantine-resilient_2020}. Applying our attacks to \gls{fedavg} might be an interesting direction for further research.

To increase the reliability of \gls{kd}-based \gls{fl}, we introduced the novel defence mechanism \expguard{}, which we demonstrated to be more resilient against previously proposed attacks as well as our new attacks. Since \expguard{} does not have a significant impact on the accuracy of \gls{fd} in the absence of byzantine clients, does not require hyperparameter tuning and can be efficiently computed, it is a promising method to reduce the impact of arbitrary failures when using \gls{kd}-based \gls{fl} methods.

To obfuscate attacks by navigating the tradeoff between their strength and detectability, we proposed \gls{hips}. Using this method, our attacks are able to reduce the accuracy of \gls{fd} even when using \expguard{}. We hope that our contributions lay the groundwork for future research in the domain of byzantine \gls{fl}.

\subsubsection*{Acknowledgments}
This research was partially supported by the DFG Cluster of Excellence MATH+ (EXC-2046/1, project id 390685689) funded by the Deutsche Forschungsgemeinschaft (DFG) as well as by the German Federal Ministry of Education and Research (fund number 01IS23025B). Most of the notations in this work have a link to their definitions, using \href{https://damaru2.github.io/general/notations_with_links/}{this code}.

\bibliography{references}
\bibliographystyle{iclr2025_conference}
\newpage

\appendix

\section{Broader Impacts}\label{sec:impact}
\gls{fl} has the potential to harness previously inaccessible data, overcoming privacy concerns and which could have positive impact in domains like healthcare. This underscores the necessity of enhancing our comprehension of these methods. However, the adoption of distributed training setups introduces new challenges and risks.

Studying \gls{fl} in the byzantine setting is often motivated as an avenue to study the resilience of these methods to network errors or bugs, the increasing deployment of \gls{fl} in critical real-world contexts amplifies the urgency of addressing threats posed by adversarial actors. The notable case of the \href{https://www.simonweckert.com/googlemapshacks.html}{``Google Maps Hacks''}, an art performance by Simon Weckert, serves as a compelling illustration. Weckert simulated a virtual traffic jam in Google Maps by strolling around Berlin with 99 smartphones, manipulating the digital representation of streets. This demonstration had tangible consequences in the physical world, rerouting cars to avoid perceived traffic congestion. Extrapolating such exploits to critical infrastructure, like a smart grid reliant on usage information for energy production, underscores the potential ramifications of malicious interventions.

\section{Experimental Details}
\label{sec:experimental-details}

\paragraph{Public datasets}
For CIFAR-10, we use the unlabeled split of the STL-10 dataset \citep{coates2011analysis}, consisting of $100$k samples. For CINIC-10 we use the validation split of CINIC-10 consisting of $90$k samples. For Clothing1M, we use the noisily labeled split of Clothing1M, consisting of 1M samples.

For CIFAR-100, we use the train split both for the public dataset and as the private datasets. While this does not reflect a realistic use-case, we did not find an unlabeled dataset that was sufficiently in-domain and large enough to achieve good performance.
We explored two alternative public datasets for CIFAR-100: Imagenet32 \citep{chrabaszcz2017adownsampled}, a downscaled version of the  Imagenet dataset \citep{russakovsky2014imagenet} and an extended version of the CIFAR-100 dataset consisting of 30k images, found on Kaggle \footnote{\url{https://www.kaggle.com/datasets/dunky11/cifar100-100x100-images-extension/data}}.
Using both of these datasets, we were not able to achieve good performance, which is probably due to domain drift.
Since the goal of this paper is not to achieve high accuracy using \gls{kd}-based FL methods but rather to evaluate byzantine attacks and defences we chose to use the same data for the local and public data. Alternatively, one could try to generate a public dataset using either a pretrained generative model or by training a generative model in via \gls{fl}, see e.g. \citep{zhu_data-free_2021}.

\textbf{\gls{fd} training.} To achieve good performance with \gls{fd}, we reinitialize the server model and train it to convergence in every communication round. Without re-initializing, we were not able to achieve high server test accuracy even with longer training. This aligns with the observation made by \citet{achille2018critical} and \citet{ash2020warm} that using noisy data in the beginning of training permanently damages the model, even if the data quality improves later on. In the early communication rounds, the clients predictions on the public dataset are inaccurate. Hence, even if the clients models were to improve over time, the early training on inaccurate labels prevents the server from learning an accurate classifier. By re-initializing the server, this problem can be circumvented and the server can learn from the progressively improving clients predictions.

\paragraph{Hyperparameters}
We run the experiments with two different random seeds and provide the standard deviation.
In each communication round, the server collects the soft predictions of the clients, i.e., the probability distribution over the classes, and aggregates them using either the mean or other aggregation methods.
The server then re-initializes the model and trains it on the public dataset using the aggregated predictions as a target.
In each round, it uses SGD+momentum with a linearly decaying learning rate.
Then, the clients train the model on their private data.
The clients also use SGD with momentum and a linearly decaying learning rate. However, their linear decay stretches across the total epochs, meaning that they do not restart the schedule in each communication round as the server.
For both the server and clients, we calculate the accuracy on the validation dataset after each epoch and implement early stopping to prevent overfitting.
\begin{table}[h]
  \center
  \label{tab:hparam}
\begin{tabular}{@{}lcccc@{}}
\toprule
 & CIFAR-10 & CIFAR-100 & CINIC-10 & Clothing1M \\ \midrule
Architecture & ResNet-18  & WideResNet-28 & ResNet-18 &  ResNet-50\\
Batch size & 128 & 256 & 128 & 256 \\
Server epochs per round & 80 & 100 & 80 & 10 \\
Total local epochs & 400 & 250 & 200 & 300 \\ \bottomrule
\end{tabular}
\end{table}

\section{Further results.} \cref{tab:attack-def-2} shows the effect of different attacks and defences in the same setting as in \cref{tab:attack-def} and \cref{tab:attack-def-hips} but for different datasets.
\begin{table}[h]
\caption{\gls{fd}: 20 clients of which nine are byzantine ($\alphafrac=0.45$). Final test accuracy averaged over multiple runs with standard deviation for different attacks and defences. BA refers to the baseline accuracy, i.e., the final accuracy of \gls{fd} if all clients are honest.\label{tab:attack-def-2}}
\vspace{1em}
 \resizebox{\linewidth}{!}{%
\begin{tabular}{@{}lcccc|cccc@{}}
 &\multicolumn{4}{c}{CIFAR-10 (ResNet-18), BA=87.7\small{\textpm 1.2}}& \multicolumn{4}{c}{Clothing1M (ResNet-50), BA=69.0\small{\textpm 0.3}}\\
\toprule
  & Mean & \gls{gm} &\cronus{} & \gls{egf}& Mean & \gls{gm} &\cronus{} & \gls{egf}\\ \midrule
  \gls{rlf}&69.4\small{\textpm 1.2}&68.7\small{\textpm 0.8}&68.6\small{\textpm 0.4}&68.7\small{\textpm 1.1}&84.6\small{\textpm 0.1}& 85.4\small{\textpm 0.6}& 84.7\small{\textpm 0.3}& 85.4\small{\textpm 0.0}\\
\lma{}&40.3\small{\textpm 3.3}&58.3\small{\textpm 0.7}&61.4\small{\textpm 0.5 }&68.3\small{\textpm 0.7}&73.4\small{\textpm 8.6}& 83.3\small{\textpm 0.2}&80.6\small{\textpm 2.3}& 85.4\small{\textpm 0.1}\\
\cpa{} &33.7\small{\textpm 2.7}&58.4\small{\textpm 0.3}&43.9\small{\textpm 12.9}&68.5\small{\textpm 1.0 }&68.4\small{\textpm 0.8}& 78.4\small{\textpm 0.9}&74.5\small{\textpm 0.6}& 85.5\small{\textpm 0.8}\\\midrule
\vspace{0.1em}\glsxtrshort{hips}+\lma{}&33.7\small{\textpm 2.7}&58.4\small{\textpm 0.3}&43.9\small{\textpm 12.9}&68.5\small{\textpm 1.0 }& 84.8\small{\textpm 0.1}&78.0\small{\textpm 1.6}&78.5\small{\textpm 1.1}& 83.8\small{\textpm 0.2}\\
\glsxtrshort{hips}+\cpa{} &63.4\small{\textpm }&55.2\small{\textpm 1.1}&54.8\small{\textpm 2.1}&57.7\small{\textpm 0.5}& 85.0\small{\textpm 0.1}&79.4\small{\textpm 0.8}&77.3\small{\textpm 0.1}& 83.2\small{\textpm 0.9}\\\bottomrule\vspace{0.1em}
\end{tabular}}
\end{table}

\paragraph{Performance of \gls{fd} and \gls{fedavg} in the benign setting.} We compare the performance of \gls{fd} and \gls{fedavg} in the benign setting. The experimental setup for \gls{fd} follows the same configuration as in \cref{tab:attack-def}, utilizing 20 clients, 10 communication rounds, but without byzantine clients. For \gls{fedavg}, we follow this setup as closely as possible, however for CIFAR-100 and Clothing1M, we had to use 100 communication rounds to achieve competitive performance.

The reason for the discrepancy between \gls{fd} and \gls{fedavg} on Clothing1M is that \gls{fedavg} is does not utilize the unlabeled public dataset. This highlights the fact that \gls{fedavg} and \gls{fd} make different assumptions about the data.

\begin{table}[h]
\caption{Comparison of final test accuracy between FedAVG and FedDistill across different datasets. Results are averaged over multiple runs with standard deviation included.}
\vspace{1em}
\label{tab:fd-vs-fedavg}
\centering
\begin{tabular}{@{}lcccc@{}}
\toprule
            & CIFAR-10 & CINIC-10 & CIFAR-100 & Clothing1M \\ \midrule
FedAVG      & 88.6\small{\textpm 0.1} & 75.1\small{\textpm 0.2} & 67.2\small{\textpm 0.3}  & 61.2\small{\textpm 1.1}    \\
FedDistill  & 86.8\small{\textpm 1.0} & 80.2\small{\textpm 0.1} & 66.8\small{\textpm 0.5}  & 69.0\small{\textpm 0.3}    \\ \bottomrule
\end{tabular}
\end{table}

\paragraph{Performance of defenses in benign settings.} We evaluate the performance of different defense mechanisms in the benign setting. The results, as detailed in \cref{tab:defense-performance}, demonstrate that the accuracy of the defenses remains largely consistent with the mean baseline, with only minor reductions in accuracy. These experiments were conducted without altering any hyperparameters, indicating that the slight accuracy variations observed could potentially be minimized through hyperparameter tuning.

\begin{table}[h]
\caption{Performance of FedDistill with different defense strategies in the benign setting without byzantine clients. Results are averaged over multiple runs with standard deviation included.}
\vspace{1em}
\label{tab:defense-performance}
\centering
\begin{tabular}{@{}lcccc@{}}
\toprule
                     & CIFAR-10 & CINIC-10 & CIFAR-100 & Clothing1M \\ \midrule
\gls{fd} (E+F)    & 86.2\small{\textpm 1.0} & 80.5\small{\textpm 0.2} & 64.9\small{\textpm 1.9}  & 68.0\small{\textpm 0.2}    \\
\gls{fd} (GM)     & 87.0\small{\textpm 0.1} & 81.1\small{\textpm 0.5} & 66.4\small{\textpm 0.4}  & 69.4\small{\textpm 0.0}    \\
\gls{fd} (\cronus{}) & 86.4\small{\textpm 0.3} & 80.4\small{\textpm 0.2} & 62.6\small{\textpm 0.8}  & 64.9\small{\textpm 0.2}    \\
\gls{fd} (mean)   & 87.7\small{\textpm 1.2} & 80.2\small{\textpm 0.1} & 66.8\small{\textpm 0.5}  & 69.0\small{\textpm 0.3}    \\ \bottomrule
\end{tabular}
\end{table}

\paragraph{Comparison to state-of-the-art \gls{fedavg} defence mechanisms.}
We implemented the state-of-the-art approach of using D-SGD with momentum and nearest neighbour mixing (NNM) with coordinate-wise median (CWMED) aggregation for CINIC-10 against the ALIE attack, as outlined in \citet{allouah2023fixing}. In the absence of attacks, D-SGD achieves an accuracy of 80.8 with mean aggregation and 77.7 with NNM+CWMED. \cref{tab:byzantine-attack} presents the accuracy for varying the number of byzantine clients and attack magnitudes $\eta$.
Note that a direct byzantine comparison between \gls{fd} and \gls{fedavg} is hard to interpret, as both methods make different assumptions about the data and have different advantages and disadvantages.

Byzantine clients can still significantly disrupt the learning process, reducing accuracy to below 20\% with 9 byzantine clients. This represents a more than fourfold decrease in accuracy compared to the scenario without byzantine clients, whereas FedDistill only experiences a 1.1-fold reduction. We have not optimized the attack amplitude for ALIE, as suggested by \citet{allouah2023fixing}, which means that these results are conservative and optimal attack amplitudes might lead to even lower accuracies.

\begin{table}[h]
\caption{Accuracy of D-SGD under ALIE attack with different amounts of byzantine clients and attack magnitudes $\eta$.}
\vspace{1em}
\label{tab:byzantine-attack}
\centering
\begin{tabular}{@{}cccccc@{}}
\toprule
\# byz. clients | $\eta$ & 0.05 & 0.1 & 0.5 & 1 & 2 \\ \midrule
1 & 78.7\small{\textpm 0.4} & 78.3\small{\textpm 0.7} & 69.0\small{\textpm 2.4} & 75.8\small{\textpm 0.0} & 79.9\small{\textpm 0.0} \\
5 & 77.2\small{\textpm 0.8} & 70.0\small{\textpm 4.0} & 27.6\small{\textpm 1.5} & 24.1\small{\textpm 0.2} & 71.5\small{\textpm 0.0} \\
9 & 55.8\small{\textpm 26.0} & 55.3\small{\textpm 5.8} & 25.8\small{\textpm 2.9} & 19.8\small{\textpm 0.2} & 62.2\small{\textpm 13.7} \\ \bottomrule
\end{tabular}
\end{table}

\section{Ablations}
\label{sec:ablations}

\paragraph{Computational cost of aggregation methods}
In \cref{tab:wallclock-aggregation}, we show the runtime of \gls{gm} and \gls{egf} for multiple datasets. Even for large datasets with many classes, the computational overhead is negligible compared to the cost of training the models. Note that the runtime is for the whole dataset. This means that for the experimental setting we consider throughout the paper with 10 communication rounds, the total time spent on aggregating the predictions is about 32 seconds, even for CIFAR-100.

\begin{table}[H]
  \caption{Runtime per communication round of different defence methods for datasets of different size and with different number of classes.\label{tab:wallclock-aggregation}}
  \center
\begin{tabular}{@{}lccc@{}}
\toprule
               & STL-10 & CIFAR-100 & Clothing1M   \\ \midrule
Samples        & 100k   & 60k       & 1M           \\
  Classes        & 10     & 100       & 14           \\\midrule
Runtime: \gls{egf} & 0.08s   & 3.2s       & 0.8s          \\
Runtime: \gls{gm}  & 0.15s   &   0.62s     & 0.77s           \\ \bottomrule
\end{tabular}
\end{table}

\paragraph{\expguard{} with other defences.}
\cref{tab:expdef+x} compares the performance of different robust aggregation methods combined with \expguard{}. The resilience of all aggregation methods is significantly improved by \expguard{}, cf. \cref{tab:attack-def,tab:attack-def-hips}. Further, \gls{egf} clearly outperforms \expguard{} combined with the other aggregation methods while \expguard{}+\cronus{} is the \emph{least} robust among all methods, except for \expguard{}+Mean. This shows that the modifications of \gls{egf} are crucial to achieve good performance.

\begin{table}[h]
  \centering
\caption{\gls{fd}: 20 clients of which nine are byzantine ($\alphafrac=0.45$). Final test accuracy averaged over multiple runs with standard deviation for different attacks and defences used \textbf{in combination with \expguard{}}. BA refers to the baseline accuracy, i.e., the final accuracy of \gls{fd} if all clients are honest.\label{tab:expdef+x}}
   \resizebox{\columnwidth}{!}{
\begin{tabular}{@{}lcccc|cccc@{}}
  & \multicolumn{4}{c}{CINIC-10 (ResNet-18), BA=80.2\small{\textpm 0.1}}                                                                       & \multicolumn{4}{c}{CIFAR-10 (ResNet-18), BA=87.7\small{\textpm 1.2}}                                                                        \\ \toprule
  & Mean& \gls{gm}& Filter &\cronus{}& Mean& \gls{gm}  & Filter&\cronus{} \\ \midrule
  \gls{rlf} & 77.1 \small{\textpm  0.4 } & 79.3 \small{\textpm  0.9 } & 78.8\small{\textpm 0.1} & 76.3 \small{\textpm  0.2 }& 77.1 \small{\textpm  0.4 } & 86.0 \small{\textpm  0.3  } & 85.4\small{\textpm 0.0} & 85.9 \small{\textpm  0.8 }\\
  \lma{}        & 53.0 \small{\textpm  4.8 } & 73.8 \small{\textpm  2.0 } & 77.4\small{\textpm 1.1} & 71.2 \small{\textpm  1.3 }& 53.0 \small{\textpm  4.8 } & 84.1 \small{\textpm  0.9  } & 85.4\small{\textpm 0.1}& 51.5 \small{\textpm  14.9 } \\
  \cpa{}                                                                      & 45.8 \small{\textpm  0.7 } & 71.9 \small{\textpm  0.6 } & 79.2\small{\textpm 0.2} & 66.1 \small{\textpm  0.5 }& 45.8 \small{\textpm  0.7 } & 10.7 \small{\textpm  18.2 } & 85.5\small{\textpm 0.8} & 2.3 \small{\textpm  0.4 }\\ \midrule
  \vspace{0.1em}\glsxtrshort{hips}+\lma{} & 75.1 \small{\textpm  0.8 } & 67.9 \small{\textpm  0.7 } & 73.3\small{\textpm 0.9}& 67.5 \small{\textpm  0.9 } & 75.1 \small{\textpm  0.8 } & 76.5 \small{\textpm  0.9  } & 83.8\small{\textpm 0.2}& 76.5 \small{\textpm  0.8 } \\
  \glsxtrshort{hips}+\cpa{}                                 & 72.7 \small{\textpm  0.4 } & 66.2 \small{\textpm  0.2 } & 72.9\small{\textpm 0.7}& 65.7 \small{\textpm  0.0 } & 72.7 \small{\textpm  0.4 } & 77.5 \small{\textpm  0.8  } & 83.2\small{\textpm 0.9}& 76.9 \small{\textpm  0.4 } \\ \bottomrule
  \vspace{0.1em}
\end{tabular}}
\end{table}

\paragraph{Number of clients and communication rounds}
In \cref{fig:fd-ablation}, we ablate the impact of the number of clients and communication rounds on \gls{fd} in the honest setting, while keeping the other parameters such as the total number of local epochs and server training epochs per round fixed.

On the left side of \cref{fig:fd-ablation}, we see that increasing the number of clients leads to decreasing accuracy. This is due to the fact that the datasets we use are of fixed size and so the amount of private data per client decreases as the number of clients increases. This is especially problematic before the first communication round, as the models have to train a model from scratch that encodes some useful information  only based on their private dataset.

On the right side of \cref{fig:fd-ablation}, we see that at first, increasing the number of communication rounds leads to an improvement, but as the number of communication rounds decreases, we first experience diminishing returns at some point even a decrease in accuracy. This in part due to the fact that we kept the total number of local epochs fixed, but we observed similar result when increasing the total number of local epochs.

\begin{figure}[h]
  \includegraphics[width=0.49\columnwidth]{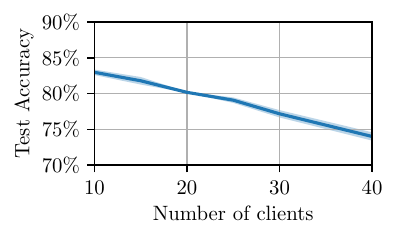}
  \includegraphics[width=0.49\columnwidth]{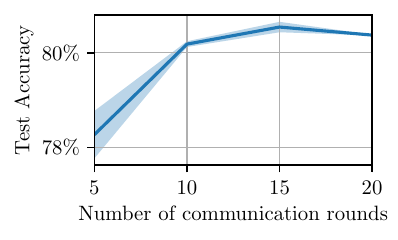}
  \caption{(Left) ResNet-18 on CINIC-10 with 10 communication rounds: Final test accuracy of \gls{fd} with different number of clients. (Right) ResNet-18 on CINIC-10 with 20 clients: Final test accuracy of \gls{fd} with different number of communication rounds.\label{fig:fd-ablation}}
\end{figure}

\paragraph{Measuring the impact of byzantine clients on EG+F performance.}
We conducted an ablation study to assess the impact of varying the number of byzantine clients on the performance of EG+F on the CINIC-10 dataset, see \cref{tab:byzantine-ablation}. We selected LMA with and without HIPS as the attack strategy, as it was the most effective attack on CINIC-10 with 9 byzantine clients. For reference, the baseline performance of \gls{fd} using mean aggregation without byzantine clients is 80.2.

\begin{table}[h]
  \caption{Test accuracy of EG+F on CINIC-10 with varying numbers of byzantine clients. Results are averaged over multiple runs with standard deviation included.}
  \vspace{1em}
  \label{tab:byzantine-ablation}
  \centering
  \begin{tabular}{@{}lccccc@{}}
    \toprule
    \# Byzantine Clients & 0 & 3 & 5 & 7 & 9 \\ \midrule
    HIPS+LMA & 80.5\small{\textpm 1.2} & 79.7\small{\textpm 1.0} & 79.8\small{\textpm 0.4} & 78.9\small{\textpm 0.1} & 72.9\small{\textpm 0.7} \\
    LMA & 80.5\small{\textpm 1.2} & 79.7\small{\textpm 0.1} & 79.9\small{\textpm 0.0} & 79.9\small{\textpm 0.2} & 77.4\small{\textpm 1.1} \\ \bottomrule
  \end{tabular}
\end{table}

\paragraph{Evaluating byzantine robustness on non-\gls{iid} data.}
We evaluate the performance of \gls{fd} in the non-\gls{iid} setting. To that end, we generate non-\gls{iid} data using the Dirichlet distribution with parameter $\beta$, which is a common approach in the \gls{fl} literature \citep{lin_ensemble_2021}. We follow the same setting as in \cref{tab:attack-def}, except that the client datasets do not follow a uniform class distribution, but rather a skewed one depending on the parameter $\beta$, where smaller $\beta$ means more heterogeneity and vice-versa. In \cref{tab:non-iid} we provide experimental results for the CINIC-10 and CIFAR-10 datasets.

We conducted a comprehensive evaluation of all attack and defense strategies, varying the $\beta$ in $\{10, 5, 1\}$. The table differentiates the defense mechanisms, presenting the final test accuracies for various attacks (listed as rows for each defense) in both the \gls{iid} and non-\gls{iid} scenarios with different $\beta$ values. Notably, for attacks without HIPS, \gls{egf} demonstrates superior resilience compared to all other defenses, with the disparity being even more pronounced in the non-\gls{iid} setting, particularly at lower $\beta$ values. Specifically, for $\beta=1$, both GM and \cronus{} underperform relative to the mean, whereas \gls{egf} achieves more than double the accuracy of the mean. Unlike other defense methods, \gls{egf} retains its robustness amidst higher heterogeneity, making it an ideal choice for non-\gls{iid} byzantine-resilient aggregation.

As in the \gls{iid} scenario, the mean shows slightly better robustness against HIPS attacks than \gls{egf}. However, the mean lacks resilience against non-HIPS attacks. Consequently, if the server is uncertain about the type of attacks it might face, \gls{egf} remains the preferable choice, offering solid worst-case performance with only a minor reduction in performance against HIPS attacks. Generally, HIPS proves to be significantly more potent in the non-\gls{iid} setting than in the \gls{iid} one. This is expected, as increased heterogeneity in client data results in more heterogeneous predictions increasing the size of the attack space, i.e., the convex hull of honest client predictions. This allows the byzantine clients to more easily evade detection while disrupting the server.

\begin{table}[h]
  \caption{Test accuracy of different defense mechanisms on CINIC-10 and CIFAR-10 datasets with varying non-\gls{iid} levels ($\beta$). Results are averaged over multiple seeds, but we omitted standard deviations for the sake of legibility. The table is divided into sections for each defense mechanism.}
  \vspace{1em}
  \label{tab:non-iid}
  \centering
  \begin{tabular}{@{}lcccc|cccc@{}}
    & \multicolumn{4}{c}{CINIC-10} & \multicolumn{4}{c}{CIFAR-10} \\ \toprule
    & \gls{iid} & $\beta=10$ & $\beta=5$ & $\beta=1$ & \gls{iid} & $\beta=10$ & $\beta=5$ & $\beta=1$ \\ \midrule
    &\multicolumn{8}{c}{\textbf{Mean}}\\\midrule
    RLF & 76.9 & 75.5 & 73.9 & 62.0 & 84.6 & 83.9 & 80.6 & 71.0 \\
    CPA & 45.9 & 37.2 & 28.1 & 18.5 & 68.4 & 57.9 & 38.2 & 15.4 \\
    LMA & 54.6 & 52.0 & 47.4 & 27.2 & 73.4 & 69.5 & 65.6 & 35.5 \\
    HIPS+CPA & 74.2 & 60.7 & 53.4 & 32.6 & 85.0 & 76.2 & 59.5 & 32.3 \\
    HIPS+LMA & 75.3 & 61.2 & 52.8 & 28.9 & 84.9 & 75.6 & 54.0 & 32.0 \\ \midrule
        &\multicolumn{8}{c}{\textbf{EG+F}}\\\midrule
    RLF & 78.8 & 75.8 & 73.0 & 63.8 & 85.4 & 83.8 & 80.3 & 69.1 \\
    CPA & 73.3 & 77.9 & 73.6 & 58.4 & 85.5 & 85.6 & 83.5 & 52.9 \\
    LMA & 77.4 & 75.8 & 73.8 & 62.0 & 85.4 & 85.0 & 83.4 & 74.0 \\
    HIPS+CPA & 72.9 & 57.4 & 41.4 & 24.3 & 57.7 & 68.3 & 42.8 & 25.3 \\
    HIPS+LMA & 73.4 & 59.1 & 44.0 & 24.4 & 68.5 & 68.2 & 44.6 & 23.0 \\ \midrule
            &\multicolumn{8}{c}{\textbf{GM}}\\\midrule
    RLF & 79.3 & 76.3 & 75.2 & 62.5 & 85.4 & 73.1 & 82.4 & 85.2 \\
    CPA & 71.2 & 44.5 & 37.6 & 9.5 & 78.4 & 67.4 & 43.8 & 10.4 \\
    LMA & 75.0 & 55.3 & 47.2 & 10.4 & 83.3 & 75.0 & 58.7 & 9.6 \\
    HIPS+CPA & 65.8 & 38.6 & 31.4 & 18.4 & 55.2 & 59.7 & 30.2 & 13.5 \\
    HIPS+LMA & 68.7 & 36.9 & 30.1 & 15.1 & 58.4 & 45.6 & 28.2 & 10.0 \\ \midrule
            &\multicolumn{8}{c}{\textbf{\cronus{}}}\\\midrule
    RLF & 76.6 & 70.8 & 62.6 & 39.1 & 84.7 & 79.4 & 75.3 & 42.7 \\
    CPA & 65.9 & 40.1 & 25.7 & 9.1 & 74.5 & 60.0 & 34.9 & 10.1 \\
    LMA & 71.1 & 51.4 & 35.1 & 10.0 & 80.6 & 71.8 & 48.0 & 7.8 \\
    HIPS+CPA & 66.4 & 38.3 & 29.5 & 17.4 & 54.8 & 48.1 & 28.9 & 16.8 \\
    HIPS+LMA & 67.7 & 37.3 & 28.6 & 16.1 & 43.9 & 42.7 & 21.0 & 10.4 \\ \bottomrule
  \end{tabular}
\end{table}

\newpage

\section{Theoretical results}
\label{sec:theoretical}
In this section, we present the formal statements and proof discussed in the main part of the paper.

\subsection{Bounding the Byzantine Error} \label{sec:bound-byzant-error}
First we show in \cref{lem:grad-y} that for the two most common loss functions, namely the $\mse$ and $\cel$, the gradient of the Loss with respect to the parameters of the classifier $w$ are linear with respect to the difference between the models prediction and the target label and it's slope is defined only by the model parameters $w$ and the data $x$.
In \cref{thm:approx-stat}, we show based on \cref{lem:grad-y} that if a parameter $\tilde{w}$ is a stationary point of \labelcref{eq:p-distill}, then it is also in the neighbourhood of an approximate stationary point of \labelcref{eq:p-distill-og}. The size of this neighbourhood is determined by the fraction of byzantines $\alphafrac$ and a constant $C$ depending on the data $x\in \Dp$ and $\tilde{w}$. In particular as $\alphafrac$ goes to zero the size of the neighbourhood goes to zero as well.

\begin{lemma}\label{lem:grad-y}
  Consider a function $\mathcal{L}(h(x,w),y)$ where $h: \mathcal{X}\times \mathcal{W}\rightarrow \simplex$ such that $w\mapsto h(x,w)$ is differentiable and $y\in \simplex$.
  Assume that one of the two statements hold
  \begin{enumerate}
    \item $\mathcal{L}(a,b)=\mse(a,b)$
    \item $\mathcal{L}(a,b)=\cel(b,a)$ and $h(x,w)=\sigma \circ \Phi$ where $\Phi:\mathcal{X}\times \mathcal{W} \rightarrow \mathbb{R}^{c}$ and $\sigma$ is the softmax function.
  \end{enumerate}
  Then there is a linear function $g:\simplex\rightarrow \mathcal{W}$ such that $g(h-y)=\nabla_w \mathcal{L}(h(x,w),y)$ which is $L_p$-Lipschitz, where $L_p$ depends on $x$ and $w$.
\end{lemma}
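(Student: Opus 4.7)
The plan is to treat the two cases separately, since both reduce to essentially the same structural identity: the gradient with respect to $w$ factors as a Jacobian (depending only on $w$ and $x$) applied to the residual $h-y$. This factorization will immediately give the desired linear function $g$ whose Lipschitz constant is controlled by the operator norm of the Jacobian.

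For Case~2 ($\mathcal{L}=\mse$), I would just compute directly. Writing $\mathcal{L}(h(x,w),y)=\tfrac{1}{2}\|h(x,w)-y\|^{2}$, the chain rule gives $\nabla_w \mathcal{L}(h(x,w),y) = J_w h(x,w)^\top (h(x,w)-y)$, where $J_w h(x,w)\in\mathbb{R}^{c\times d}$ is the Jacobian of $h$ in $w$. Thus $g(u) \defi J_w h(x,w)^\top u$ is linear in $u$ and plugging $u=h-y$ yields the claim. Lipschitzness with constant $L_p = \|J_w h(x,w)\|_{\mathrm{op}}$ is immediate, and this constant depends only on $x$ and $w$ as required.

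For Case~1 ($\mathcal{L}=\cel$ with $h=\sigma\circ\Phi$), the key observation is the classical identity that the gradient of the cross-entropy with respect to the softmax pre-activations collapses to the residual: $\nabla_{\Phi}\,\cel(\sigma(\Phi),y) = \sigma(\Phi)-y = h-y$. I would derive this in one short display by writing $\cel(\sigma(\Phi),y) = -\sum_k y_k \log\sigma(\Phi)_k = -\sum_k y_k \Phi_k + \log\!\sum_k e^{\Phi_k}$ (using $\sum_k y_k=1$) and differentiating component-wise. Then the chain rule yields $\nabla_w \mathcal{L} = J_w \Phi(x,w)^\top (h(x,w)-y)$, so that setting $g(u)\defi J_w \Phi(x,w)^\top u$ again produces a linear map, with Lipschitz constant $L_p=\|J_w\Phi(x,w)\|_{\mathrm{op}}$ depending only on $x$ and $w$.

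I expect no serious obstacle: the hardest (and only slightly subtle) step is carefully invoking the softmax/cross-entropy identity in Case~1 to avoid introducing a spurious $\sigma'(\Phi)$ factor that would prevent $g$ from being linear in $h-y$. This cancellation is precisely what makes the lemma hold for this particular pairing of loss and final nonlinearity, and it is what is tacitly used in the bound $\|\nabla_w \mathcal{L}(w,\YH)-\nabla_w \mathcal{L}(w,\Ytpb)\|\le C\alpha\|\YH-\YB\|$ stated in Section~\ref{sec:fd}. Once the identity is in hand, the statement follows by a one-line application of the chain rule.
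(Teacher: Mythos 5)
Your proposal is correct and follows essentially the same route as the paper: in both cases the gradient is factored as a $w$- and $x$-dependent Jacobian applied to the residual $h-y$, with the operator norm of that Jacobian serving as $L_p$. The only cosmetic difference is that you derive the softmax/cross-entropy cancellation via the log-sum-exp form of the loss, whereas the paper differentiates through the softmax Jacobian componentwise (and in doing so picks up an immaterial sign discrepancy that your derivation avoids).
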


\begin{proof}[Proof of \cref{lem:grad-y}]
~\paragraph{Case 1.}
  We have $\mse(p,q)= \frac{1}{2}\norm{p-q}^2$. Then using the shorthand $h_i$ for the $i$-th element of $h(x,w)$ and by applying the chain rule, we have
  \begin{equation*}
   \frac{\partial\mse(h(x,w),y) }{\partial w_j}= \sum_{i=1}^c\frac{\partial\mse(h(x,w),y) }{\partial h_i}\frac{\partial h_i }{\partial w_j}=\sum_{i=1}^c (y_i-h_i ) \frac{\partial h_i }{\partial w_j}.
 \end{equation*}
 Hence we can write the gradient as follows
 \begin{equation*}
    \nabla_w \mse(h(x,w),y)=J_h(w) (y-h(x,w)),
 \end{equation*}
 where $J_h$ is the Jacobian of $h$.
Let $g(h(x,w)-y)\defi J_h(w)^T (y-h(x,w)) $. Then $g$ is linear and $L_p$-Lipschitz, where $L_p\le \norm{J_h(w)}$. Note that $J_h(w)$ depends only on $x\in\Dp$ and $w$.\\\\
~\paragraph{Case 2.}
We have that $\cel(p,q)\defi-\sum_{i=1}^n p_i\log(q_i)$ and $\nabla_q\cel(p,q)=-(p_1/q_1,\ldots,p_N/q_N)^{T}$.
Furthermore, $\sigma_i(p)\defi\frac{\exp(p_i)}{\sum_{i=1}^n\exp(p_i)}$ and $\frac{\partial \sigma_i(p)}{\partial p_{j}}= \sigma_i(p)(\delta_{ij}-\sigma_j(p))$ where $\delta_{ij}=1$ if $i=j$ and $\delta_{ij}=0$ otherwise.
\begin{equation*}
  \frac{\partial \sigma_i(p)}{\partial p_{j}}= \sigma_i(p)(\delta_{ij}-\sigma_j(p)), \quad\text{with } \delta_{ij}=
  \begin{cases}
    1,& i=j\\
    0,& \text{otherwise}.
  \end{cases}
\end{equation*}
Hence, using the chain rule,
\begin{align*}
  \frac{\partial\cel(y,\sigma(\Phi))}{\partial \Phi_i} &= \sum_{j=1}^c \frac{\partial \cel(y,\sigma(\Phi))}{\partial \sigma_j(\Phi)}\cdot \frac{\partial \sigma_j(\Phi)}{\partial \Phi_{i}}= \sum_{j=1}^c (y_j/\sigma_j(\Phi))\sigma_j(\Phi)(\delta_{ij}-\sigma_i(\Phi))\\
  &= \sum_{j=1}^c y_j(\delta_{ij}-\sigma_i(\Phi)) = -\sigma_i(\Phi)\left(\sum_{j=1}^cy_j\right) + y_i= y_i-\sigma_i(\Phi).
\end{align*}
Then,
\begin{align*}
   \frac{\partial\cel(y,\sigma(\Phi))}{\partial w_j} =  \sum_{i=1}^c\frac{\partial\cel(y,\sigma(\Phi))}{\partial \Phi_i}\frac{\partial \Phi_i}{\partial w_j} = \sum_{i=1}^c(y_i-\sigma_i(\Phi))\frac{\partial \Phi_i}{\partial w_j}.
\end{align*}
It follows that
\begin{equation*}
\nabla_w \cel(y,\sigma(\Phi))= J_{\Phi}(w)^T(y-h(x,w)),
\end{equation*}
where $J_h$ is the Jacobian of $h$.
Now let $g(h(x,w)-y)\defi J_\Phi(w)^T (y-h(x,w)) $. Then $g$ is linear and $L_p$-Lipschitz, where $L_p\le \norm{J_h(w)}$. Note that $J_h(w)$ depends only on $x\in \Dp$ and $w$.\\\\
\end{proof}
\begin{theorem}\label{thm:approx-stat}
  Consider the optimization problem defined in \labelcref{eq:p-distill-og}, i.e.,
  \begin{equation*}
   \min_{w\in \mathcal{W}} \left\{  F(w)\defi\frac{1}{\vert \Dp\vert}\sum_{x\in \Dp}\mathcal{L}(h(x,w),\YH(x))  \right\}.
 \end{equation*}
 and the optimization problem defined in \labelcref{eq:p-distill}, i.e.,
  \begin{equation*}
     \min_{w\in \mathcal{W}} \left\{ \tilde{F}(w)\defi\frac{1}{\vert \Dp\vert}\sum_{x\in \Dp}\mathcal{L}(h(x,w),\Yb(x))\right\}.
  \end{equation*}
 Assume that one of the following holds:
  \begin{enumerate}
    \item $\mathcal{L}(p,q) = \cel(q,p)$ and $h(x,w)=\sigma \circ \Phi$ with function $\Phi:\mathcal{X}\times \mathcal{W} \rightarrow \mathbb{R}^{c}$ and $\sigma$ is the softmax function.\label{item:cel}
    \item $\mathcal{L}(p,q) = \mse(p,q)$.\label{item:mse}
  \end{enumerate}
Then, if $\tilde{w}$ is a stationary point of $\tilde{F}$, i.e., $\norm{\nabla \tilde{F}(\tilde{w})}=0$, it also holds that
  \begin{equation*}
    \norm{\nabla F(\tilde{w})}\le \frac{1}{\vert \Dp\vert}\sum_{x\in\Dp}C\alphafrac \norm{\YH(x) -\YB(x)}\le \sqrt{2}\alphafrac C,
  \end{equation*}
  where $C>0$ is a constant which depends only on $\Dp$ and $w$, and is independent of the client predictions.
 If further, $\mathcal{L}(h(x,\cdot),w)$ is $L$-smooth, then running \gls{sgd} on $\tilde{F}$ initialized at $w_0$ for $T=\mathcal{O}\left(\frac{L^2 F(w_0)^2+\sigma^4 }{\varepsilon^2}\right)$ iterations, where $\sigma$ is the gradient variance, then it holds that $\norm{\nabla F(\bar{w}_T)}^2=\mathcal{O}(\varepsilon + C^2 \alphafrac^2)$.
\end{theorem}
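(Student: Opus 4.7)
The plan is to leverage \cref{lem:grad-y}, which says that in both the \gls{mse} and \gls{cel}$+$softmax settings, the gradient $\nabla_w \mathcal{L}(h(x,w), y)$ is a linear, $L_p$-Lipschitz function of the target $y$ (specifically of the residual $h(x,w)-y$), with the Lipschitz constant $L_p$ depending only on $x$ and $w$. This immediately gives, at any fixed $w$ and $x$,
\begin{equation*}
\nabla_w \mathcal{L}(h(x,w), \YH(x)) - \nabla_w \mathcal{L}(h(x,w), \Ytpb(x)) = g\bigl(\Ytpb(x) - \YH(x)\bigr),
\end{equation*}
and since $\Ytpb = (1-\alpha)\YH + \alpha \YB$, the argument equals $\alpha(\YB(x)-\YH(x))$. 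Applying the Lipschitz bound and averaging over $x\in \Dp$, I would obtain
\begin{equation*}
\bigl\lVert \nabla F(w) - \nabla \tilde{F}(w)\bigr\rVert \le \frac{\alpha}{|\Dp|}\sum_{x\in \Dp} L_p(x,w)\,\lVert \YB(x)-\YH(x)\rVert \le \sqrt{2}\,\alpha\, C(w),
\end{equation*}
where the final bound uses that both $\YH(x)$ and $\YB(x)$ lie in $\simplex$, so their distance is at most $\sqrt{2}$, and $C(w)$ absorbs the sup of $L_p(x,w)$ over $x$.

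The first assertion then follows immediately: if $\nabla \tilde{F}(\tilde{w})=0$, the triangle inequality gives $\lVert \nabla F(\tilde{w})\rVert = \lVert \nabla F(\tilde{w}) - \nabla \tilde{F}(\tilde{w})\rVert \le \sqrt{2}\alpha C(\tilde{w})$, which is exactly the displayed bound (modulo the apparent typo $\YH-\YH$ that should read $\YH-\YB$).

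For the \gls{gd} statement, I would run the standard descent-lemma analysis for $L$-smooth nonconvex objectives on $\tilde{F}$: the iterates $w_{t+1}=w_t - \tfrac{1}{L}\nabla \tilde{F}(w_t)$ satisfy $\tilde{F}(w_{t+1}) \le \tilde{F}(w_t) - \tfrac{1}{2L}\lVert \nabla \tilde{F}(w_t)\rVert^2$, so telescoping over $T$ steps yields $\min_{t<T}\lVert \nabla \tilde{F}(w_t)\rVert^2 \le \tfrac{2L(\tilde{F}(w_0)-\tilde{F}^\star)}{T} = \mathcal{O}(L\tilde{F}(w_0)/T)$. Combining with the perturbation bound above via $\lVert \nabla F(w_t)\rVert^2 \le 2\lVert \nabla \tilde{F}(w_t)\rVert^2 + 2\lVert \nabla F(w_t)-\nabla \tilde{F}(w_t)\rVert^2$ produces the claimed $\mathcal{O}(LF(w_0)/T + \alpha^2 C^2)$ stationarity bound for $F$ (replacing $\tilde{F}(w_0)$ by $F(w_0)$ is harmless up to an additive $\alpha^2 C^2$ term coming from the same perturbation argument).

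The main obstacle I expect is purely bookkeeping: pinning down exactly what $C$ depends on (it is really a supremum of $\lVert J_h(w)\rVert$ or $\lVert J_\Phi(w)\rVert$ over $x\in\Dp$ and over the relevant iterates), and ensuring the constant does not degenerate along the \gls{gd} trajectory. Aside from that, the argument is essentially the textbook nonconvex \gls{gd} analysis combined with the linearity of the gradient in the target label that \cref{lem:grad-y} gives for free; no new technical machinery is required.
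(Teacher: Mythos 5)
Your first part coincides with the paper's argument essentially line for line: both use \cref{lem:grad-y} to express the gradient as a linear, Lipschitz function of the residual, decompose $\Ytpb=(1-\alpha)\YH+\alpha\YB$ to pull out the factor $\alpha$, bound $\norm{\YH(x)-\YB(x)}\le\sqrt{2}$ on $\simplex$, and observe that at a stationary point of $\tilde{F}$ the gradient of $F$ equals the gradient error. (You are also right that the displayed $\norm{\YH(x)-\YH(x)}$ is a typo for $\norm{\YH(x)-\YB(x)}$.) For the \gls{gd} part, however, you take a genuinely different route from the paper, and yours has two soft spots. You apply the descent lemma to $\tilde{F}$ and then transfer stationarity to $F$ via $\norm{\nabla F}^2\le 2\norm{\nabla\tilde{F}}^2+2\norm{B}^2$; this requires $\tilde{F}$ to be $L$-smooth, whereas the theorem only assumes $F$ is $L$-smooth, and it leaves you with $\tilde{F}(w_0)-\tilde{F}^\star$ in the rate, whose replacement by $F(w_0)$ is a \emph{function-value} perturbation that the gradient-error bound does not control (for \gls{cel} the value gap involves $\log$ of possibly tiny softmax outputs, so it is not "the same perturbation argument"). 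The paper instead runs the descent lemma on $F$ itself, treating $\nabla\tilde{F}(w_t)=\nabla F(w_t)-B(w_t)$ as a biased descent direction; with step size $\eta=1/L$ the cross term $\langle\nabla F(w_t),B(w_t)\rangle$ cancels exactly, giving $\norm{\nabla F(w_t)}^2\le 2L(F(w_t)-F(w_{t+1}))+\norm{B(w_t)}^2$, which telescopes directly to the stated $\mathcal{O}(LF(w_0)/T+\alpha^2C^2)$ bound using only the assumed smoothness of $F$ and the uniform bound $\norm{B}\le\sqrt{2}\alpha C$. The biased-gradient analysis thus buys you exactly the two things your version is missing; your version would still go through if you additionally assumed smoothness of $\tilde{F}$ and bounded $|F-\tilde{F}|$, but that is extra machinery the paper avoids.
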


\begin{proof}[Proof of \cref{thm:approx-stat}]
  We prove the theorem in two parts. First we bound the norm of the gradient error and then we show approximate convergence of \gls{sgd} given the gradient error bound.
  Note that $\nabla F(w)$ and $\nabla \tilde{F}(w)$ only differ in the second argument of $\mathcal{L}$.
  By \cref{lem:grad-y}, we have that
  \begin{equation*}
    \nabla F(w)= \frac{1}{\vert \Dp\vert}\sum_{x\in\Dp} \nabla\mathcal{L}(h(x,w),\YH(x)) ,\quad \nabla\tilde{F}(w)= \frac{1}{\vert \Dp\vert}\sum_{x\in\Dp} \nabla\mathcal{L}(h(x,w),\Yb(x)),
  \end{equation*}
  are $C$-Lipschitz function with respect to the second argument, where $C$ depends on $w$ and $x\in \Dp$ only.
  Denote by $\norm{B(w)}  \defi\norm{\nabla F(w)-\nabla \tilde{F}(w)}$ the gradient error, then
  \begin{equation}
    \label{eq:grad-bias}
    \norm{B(w)}  \le \sum_{x\in \Dp}\frac{1}{\vert \Dp\vert}\norm{\YH(x)-\Yb(x)}=\sum_{x\in \Dp}\frac{\alphafrac C}{\vert \Dp\vert}\norm{\YH(x)-\YB(x)}\le \sqrt{2}\alphafrac C,
  \end{equation}
  where the first inequality holds by the Lipschitzness, the second inequality holds since $\Yb=(1-\alphafrac)\YH + \alphafrac \YB$ and the last inequality holds since the maximum $\ell_2$ distance between two points in $\simplex$ is $\sqrt{2}$.
  Now let $\tilde{w}$ be a stationary point of $\tilde{F}$, then we have
  \begin{align*}
 \norm{\nabla F(\tilde{w})}=  \norm{\nabla F(\tilde{w})-\nabla \tilde{F}(\tilde{w})}=\norm{B(\tilde{w})},
  \end{align*}
which together with \cref{eq:grad-bias} shows the first part of the theorem.\\\\
For the second part of the proof, we define 
$$f_i(w)= \mathcal{L}(h(x_i,w),\YH(x_i)), \quad \tilde{f}_i(w)=\mathcal{L}(h(x_i,w),\Yb(x_i))
$$
and it follows that $$F(w)=\mathbb{E}_i[f_i(w)]=\frac{1}{\vert \Dp\vert}\sum_{i=1}^{\vert \Dp\vert}
f_i(w) \text{ and }  \tilde{F}(w)=\mathbb{E}_i[\tilde{f}_i(w)]=\frac{1}{\vert \Dp\vert}\sum_{i=1}^{\vert \Dp\vert} \tilde{f}_i(w).$$
The update rule of SGD is defined as $w_{t+1}\gets w_t -\gamma \nabla \tilde{f}_i(w_t)$, where $i$ is sampled uniformly at random.
We have that
\[
\nabla \tilde{f}_i(w_t) = \nabla F(w_t) + \nabla \tilde{f}_i(w_t)-\nabla f_i(w_t)+\nabla f_i(w_t)-\nabla F(w_t)
\]

By the $L$ smoothness of $\mathcal{L}(h(x,\cdot),y)$ and the \gls{sgd} update rule, we have
\begin{align*}
        f_i(w_{t+1})-f_i(w_t)&\le \langle \nabla f_i(w_t),w_{t+1}-w_t\rangle + \frac{L}{2}\norm{w_{t+1}-w_t}^2\\
             &= -\gamma\langle \nabla f_i(w_t),\nabla \tilde{f}_i(w_t)\rangle + \frac{L\gamma^2}{2}\norm{\nabla \tilde{f}_i(w_t)}^2.
\end{align*}
Now, taking the expectation with respect to $i$ on both sides, we obtain
\begin{align*}
            F(w_{t+1})-F(w_t)&\le -\gamma\langle \nabla F(w_t),\nabla \tilde{F}(w_t)\rangle + \frac{L\gamma^2}{2}\mathbb{E}\left[\norm{\nabla \tilde{f}_i(w_t)}^2\right]\\
            &=-\gamma\langle \nabla F(w_t),\nabla \tilde{F}(w_t)\rangle + \frac{L\gamma^2}{2}
            \left(\mathbb{E}\left[\norm{\nabla \tilde{f}_i(w_t)-\nabla \tilde{F}(w_t)}^2\right] + \norm{\nabla \tilde{F}(w_t)}^2\right),
\end{align*}
where the second inequality holds by noting that $\mathbb{E}[X^2]=\text{Var}(X)+ \mathbb{E}[X]^2$.
We bound,
\begin{align*}
   \mathbb{E}\left[\norm{\nabla \tilde{f}_i(w_t)-\nabla \tilde{F}(w_t)}^2\right]&= \mathbb{E}\left[\norm{\nabla \tilde{f}_i(w_t)-\nabla f_i(w_t)+\nabla f_i(w_t)-\nabla F(w_t)-\nabla F(w_t)+\nabla \tilde{F}(w_t)}^2\right]\\
   &\le 3 \mathbb{E}\left[\norm{\nabla \tilde{f}_i(w_t)-\nabla f_i(w_t)}^2\right] + 3 \mathbb{E}\left[\norm{\nabla f_i(w_t)-\nabla F(w_t)}\right]+ 3 \norm{B(w_t)}^2\\
   &\le 6C^2\alphafrac^2 + 3 \sigma^2+ 3 \norm{B(w_t)}^2
\end{align*}
where the last line follows by noting $\sigma^2\defi \mathbb{E}\left[\norm{\nabla f_i(w_t)-\nabla F(w_t)}^2\right]$ and since 
\[
\mathbb{E}\left[\norm{\nabla \tilde{f}_i(w_t)-\nabla f_i(w_t)}^2\right] = \frac{1}{\vert \Dp\vert}\sum_{i=1}^{\vert \Dp\vert} \norm{\tilde{f}_i(w_t)-\nabla f_i(w_t)}^2\le 2C^2 \alphafrac^2.
\]
Using these terms, we further bound
\begin{align*}
    F(w_{t+1})-F(w_t)&\le -\gamma\langle \nabla F(w_t),\nabla \tilde{F}(w_t)\rangle + \frac{L\gamma^2}{2}\left(6C^2\alphafrac^2 + 3 \sigma^2+ 3 \norm{B(w_t)}^2+ \norm{\nabla \tilde{F}(w_t)}^2\right)\\
    &= -\gamma \mathbb{E}\left[\norm{\nabla f_i(w_t)}^2\right]  + 6\gamma C^2\alphafrac^2 + \frac{3L\gamma^2\sigma^2}{2},
\end{align*}
            where the last inequality follows by choosing $\gamma\le \frac{2}{3L}$ and since by \cref{eq:grad-bias}, we have $\norm{B(w_t)}^2\le 2\alphafrac^2C^2$.
Rearranging, summing from $t=0,\ldots, T-1$ and dividing by $T$, we obtain
\begin{align*}
 \frac{1}{T}\sum_{t=0}^{T-1}\mathbb{E}\left[\norm{\nabla f_i(w_t)}^2\right]\le \frac{F(w_0)-F(w_T)}{\gamma T} + 6\alphafrac^2C^2  + \frac{3L\gamma \sigma^2}{2}
\end{align*}
Then, choosing $\gamma=\frac{2}{3L \sqrt{T}}$, we obtain
\begin{align*}
 \frac{1}{T}\sum_{t=0}^{T-1}\mathbb{E}\left[\norm{\nabla f_i(w_t)}^2\right]\le \frac{3LF(w_0)}{2\sqrt{T}} + 6\alphafrac^2C^2 + \frac{\sigma^2}{\sqrt{T}}
\end{align*}
 Hence, choosing $T=\mathcal{O}(\frac{L^2 F(w_0)^2+\sigma^4 }{\varepsilon^2})$ ensures that
 \begin{equation*}
   \frac{1}{T}\sum_{t=0}^{T-1} \mathbb{E}\left[\norm{\nabla f_i(w_t)}^2\right] =\mathcal{O}\left(\varepsilon + \alphafrac^2C^2_y\right).
 \end{equation*}
 Note that $\frac{1}{T}\sum_{t=0}^{T-1} \mathbb{E}\left[\norm{\nabla f_i(w_t)}^2\right]$ is equal to $\mathbb{E}\left[\norm{\nabla f_i(\bar{w}_T)}^2\right]$, where $\bar{w}_T$ is drawn uniformly at random from $\{w_0,\ldots,w_{T+1}\}$. This concludes the proof.
\end{proof}

\subsection{Attack optimality}
\label{sec:att-opt}
In this subsection, we discuss the results concerning the attacks discussed in the paper.
In \cref{lem:max-sol}, we prove that optimization problem satisfying certain conditions, of which the optimization problems corresponding to our attacks are instances, always admit an optimizer on the extreme point of the constraint set $\mathcal{X}$.
Based on this result, we solve the optimization problem associated to \labelcref{eq:l-max} and its \gls{hips} variant in closed form in \cref{lem:lma} and \cref{cor:lied-lma}.
\begin{lemma}\label{lem:max-sol}
  Let $\mathcal{X}\subset \mathbb{R}^c$ be a convex polytope.
  Let $f:\mathbb{R}^c\rightarrow \mathbb{R}$ be continuous and convex in $\mathcal{X}$.
  Then the optimization problem
  \begin{equation*}
   \max_{x\in \mathcal{X}}f(x),
 \end{equation*}
 admits an optimizer on an extreme point of $\mathcal{X}$.
\end{lemma}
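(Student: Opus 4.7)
The plan is to combine two standard facts: (i) a bounded convex polytope equals the convex hull of its finitely many extreme points, and (ii) a convex function evaluated at a convex combination is bounded above by the corresponding convex combination of its values. Together these reduce maximizing $f$ over $\mathcal{X}$ to maximizing over the finite extreme-point set.

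First I would invoke compactness: since $\mathcal{X}$ is a convex polytope in $\mathbb{R}^c$ it is closed and bounded, hence compact, and the continuous function $f$ attains its maximum at some $x^\star \in \mathcal{X}$. Let $V = \{v_1, \dots, v_k\}$ denote the (finite) set of extreme points of $\mathcal{X}$; by the Minkowski/Krein-Milman representation for polytopes, $\mathcal{X} = \operatorname{conv}(V)$, so there exist $\lambda_i \geq 0$ with $\sum_{i=1}^k \lambda_i = 1$ and $x^\star = \sum_{i=1}^k \lambda_i v_i$.

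Next I would apply convexity of $f$ on $\mathcal{X}$: Jensen's inequality gives
\begin{equation*}
 f(x^\star) \;=\; f\!\Bigl(\sum_{i=1}^k \lambda_i v_i\Bigr) \;\leq\; \sum_{i=1}^k \lambda_i f(v_i) \;\leq\; \max_{i \in [k]} f(v_i).
\end{equation*}
Let $i^\star \in \arg\max_{i \in [k]} f(v_i)$. Since $v_{i^\star} \in \mathcal{X}$ and $x^\star$ is a global maximizer, we also have $f(v_{i^\star}) \leq f(x^\star)$, so the two inequalities collapse to equality, exhibiting the extreme point $v_{i^\star}$ as an optimizer.

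I do not anticipate a serious obstacle here, since the argument is a textbook application of Krein-Milman / the representation of polytopes. The only minor subtlety to flag is that $f$ is only required to be convex on $\mathcal{X}$ (not globally), but Jensen's inequality only needs convexity along the segment containing the $v_i$'s and $x^\star$, all of which lie inside $\mathcal{X}$, so the hypothesis suffices. This lemma then feeds directly into \cref{lem:lma} and \cref{cor:lied-lma}, where the attack objectives are linear (hence convex) in the byzantine prediction and the feasible sets ($\simplex$ for \gls{lma}/\gls{cpa}, $\mathcal{A}$ for their \gls{hips} variants) are convex polytopes whose extreme points are exactly the class-indicator vectors and the honest predictions, respectively.
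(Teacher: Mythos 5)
Your proof is correct and follows essentially the same route as the paper's: the extreme value theorem to obtain a maximizer, the representation of the polytope as the convex hull of its extreme points, and Jensen's inequality to transfer optimality to an extreme point. Your write-up is in fact slightly more careful than the paper's (making the compactness step and the final equality argument explicit, and flagging that convexity on $\mathcal{X}$ alone suffices), but there is no substantive difference in approach.
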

\begin{proof}[Proof of \cref{lem:max-sol}]
  By the extreme value theorem, $f$ attains a maximum $x^{*}$ in $\mathcal{X}$.
  Since $\mathcal{X}$ is a convex polytope, each $x\in \mathcal{X}$ can be represented as a convex combination of the extreme points of $\mathcal{X}$ denoted by $\operatorname{Ext}(\mathcal{X})$.
  Denote by $\{x_i\}_{i\in [n]}$ the extreme points of $\mathcal{X}$.
  In particular we can represent the maximum as $x^{*}=\sum_{i=1}^np_ix_i$ where $p\in \Delta_n$.
  By Jensen's inequality, we have
  \begin{equation*}
   f(x^{*})=f \left( \sum_{i=1}^np_ix_i \right)\le \sum_{i=1}^np_if(x_i)
 \end{equation*}
 Since all $p_i$ are non-negative, there exists at least one $x_i$ such that $f(x_i)\ge f(x^{*})$, by the optimality of $x^{*}$, we have that $f(x^{*})\ge f(x)$ for all $x\in \mathcal{X}$, hence $f(x_i)=f(x^{*})$, which concludes the proof.
\end{proof}
\begin{lemma}\label{lem:lma}
  The prediction $\YB^{*}=\mathds{1}_i$ with $i\in\argmin \YH$ is an optimizer of \labelcref{eq:l-max} if one of the two statements hold
  \begin{enumerate}
    \item $\mathcal{L}(a,b)=\cel(b,a)$ and $\YH\in \Int(\simplex)$
    \item $\mathcal{L}(a,b)=\mse(a,b)$ .
  \end{enumerate}
\end{lemma}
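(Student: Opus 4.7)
The plan is to combine \cref{lem:max-sol} with a per-vertex comparison. First I would verify that the map $\YB\mapsto \mathcal{L}((1-\alpha)\YH+\alpha\YB,\YH)$ is continuous and convex on $\simplex$. For $\mse$, $\|(1-\alpha)\YH+\alpha\YB-\YH\|^2=\alpha^2\|\YB-\YH\|^2$ is a nonnegative quadratic. For $\cel$, the objective equals $-\sum_k \YH_k\log((1-\alpha)\YH_k+\alpha\YB_k)$, a nonnegative linear combination of compositions of the convex function $-\log$ with affine functions of $\YB$. In either case, \cref{lem:max-sol} applies (with $\mathcal{X}=\simplex$, whose extreme points are the indicators $\mathds{1}_j$, $j=1,\dots,c$), and the optimum is attained at some $\YB=\mathds{1}_j$.

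Next I would compare the objective values across the $c$ candidate vertices and show that the index $j\in\argmin\YH$ is optimal. In the $\mse$ case this is a direct computation: $\|\mathds{1}_j-\YH\|^2=\|\YH\|^2+1-2\YH_j$, so minimizing $\YH_j$ maximizes the loss. In the $\cel$ case, setting $\YB=\mathds{1}_j$ gives, after isolating the $k=j$ term,
\begin{equation*}
  -\sum_k\YH_k\log((1-\alpha)\YH_k) \;-\; \YH_j\log\!\left(1+\tfrac{\alpha}{(1-\alpha)\YH_j}\right).
\end{equation*}
The first sum is independent of $j$, so maximizing over $j$ reduces to minimizing the scalar function $h(y)\defi y\log\!\bigl(1+\tfrac{\alpha}{(1-\alpha)y}\bigr)$ evaluated at $y=\YH_j$.

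The main step is showing that $h$ is strictly increasing on $(0,1]$, which is where the assumption $\YH\in\Int(\simplex)$ (ensuring $\YH_k>0$) is used to keep things well-defined. A direct differentiation yields
\begin{equation*}
  h'(y)=\log\!\left(1+\tfrac{\alpha}{(1-\alpha)y}\right)-\tfrac{\alpha}{(1-\alpha)y+\alpha},
\end{equation*}
which with $u=\alpha/((1-\alpha)y)>0$ becomes $\log(1+u)-\tfrac{u}{1+u}$. This is strictly positive by the standard inequality $\log(1+u)>u/(1+u)$ for $u>0$ (obtained from $\log(1+u)=\int_0^u\tfrac{dt}{1+t}\ge \tfrac{u}{1+u}$, with strict inequality for $u>0$). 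Hence $h$ is strictly increasing, so $j\in\argmin\YH$ minimizes $h(\YH_j)$ and thus maximizes the objective, giving $\YB^*=\mathds{1}_i$ with $i\in\argmin\YH$ as claimed. I expect the monotonicity step for $\cel$ to be the only nontrivial obstacle; everything else reduces to invoking \cref{lem:max-sol} and a short algebraic identity.
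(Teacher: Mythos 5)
Your proposal is correct and follows essentially the same route as the paper: apply \cref{lem:max-sol} to reduce the maximization over $\simplex$ to its extreme points $\mathds{1}_j$, then compare the $c$ vertex values, reducing the $\cel$ case to the monotonicity of a scalar function of $\YH_j$. If anything, your version is tighter than the paper's at the two comparison steps — the identity $\norm{\mathds{1}_j-\YH}^2=\norm{\YH}^2+1-2\YH_j$ is cleaner than the paper's term-by-term monotonicity argument (whose final line even misstates the optimizer as an $\argmax$), and your explicit computation of $h'(y)=\log(1+u)-u/(1+u)>0$ supplies the rigorous justification for the monotonic-decrease claim that the paper only asserts informally.
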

\begin{remark}
  The image of the softmax function is $\Int(\simplex)$. This means that if the honest clients use the softmax activation function in order to obtain a probability distribution over the classes, which is standard practice in training neural networks, the assumption $\YH\in \Int(\simplex)$ in \cref{lem:lma} is always satisfied.
\end{remark}

\begin{proof}[Proof of \cref{lem:lma}]
For the sake of simplicity, we write $y\defi\YH(x)$ and $x\defi\YB(x)$ throughout this proof.
~\paragraph{Case 1.}
The \gls{mse} is convex and differentiable, hence by \cref{lem:max-sol}, there is an optimizer of \cref{eq:l-max} on an extreme point of $\simplex$, i.e., the one-hot vector $\mathds{1}_i$ for all $i\in [c]$.
     Using  $\mathcal{L}=\text{MSE}$ and the characterization of the optimizer, we rewrite \cref{eq:l-max} as follows
    \begin{equation*}
\max_{x\in \simplex} \frac{1}{2}\norm{y-((1-\alphafrac)y+\alphafrac x)}^2=\max_{x\in \simplex} \frac{\alphafrac}{2}\norm{y-x}^2=\frac{\alphafrac}{2}\max_{i\in [c]} \norm{y-\mathds{1}_{i}}^{2}.
    \end{equation*}
    We rewrite the problem
    \begin{equation*}
     \frac{\alphafrac}{2}\max_{i\in [c]} \norm{y-\mathds{1}_{i}}^{2}= \frac{\alphafrac}{2}\max_{i\in [c]} \left\{ \sum_{j\neq i}y_j^2 + (y_i-1)^2\right\}
    \end{equation*}
    Note that for $a\in [0,1]$, $a^2$ is monotonically increasing and $(1-a)^2$ is monotonically decreasing. Hence the first summand of the right hand side is maximized by choosing the largest entries of $y$ and second summand of the right hand side is maximized by choosing the smallest entry of $y$.
We conclude that $\argmax_{i\in [c]} y_i$ is an optimizer of \cref{eq:l-max}.
~\paragraph{Case 2.}
Note that $v\mapsto \cel(w,v)$ is continuous for $v\in \operatorname{Ext}(\simplex)$ and $w\in \simplex$ and convex for $v\in \simplex$.
By assumption, $y\in \Int(\simplex)$ and $\alphafrac>0$, hence $\alphafrac x + (1-\alphafrac)y\in \Int(\simplex)$, meaning that $x\mapsto\cel(y,\alphafrac x + (1-\alphafrac)y)$ is continuous for all $x\in \simplex$.
   Hence by \cref{lem:max-sol}, there is an optimizer of \cref{eq:l-max} on an extreme point of $\simplex$, i.e., the one-hot vector $\mathds{1}_i$ for all $i\in [c]$.
We explicit \cref{eq:l-max},
\begin{equation*}
        \max_{x\in \simplex} \cel(x,(1-\alphafrac)y+\alphafrac y)=\max_{x\in \simplex} -\sum_{i=1}^cx_i\log((1-\alphafrac)y_j+\alphafrac x_{i}).
\end{equation*}
Since \cref{eq:l-max} admits an optimizer on an extreme point of $\simplex$, we can write
    \begin{equation*}
    k= \argmax_{j\in [c]} \left\{ -y_j\log((1-\alphafrac)y_j+\alphafrac)-\sum_{i\neq j}^c y_i \log((1-\alphafrac)y_i) \right\}.
   \end{equation*}
   We rewrite the problem as follows
   \begin{align*}
   &  \max_{j\in [c]} -\sum_{i=1}^c y_i \log((1-\alphafrac)y_i) +y_j\log((1-\alphafrac)y_j) -y_j\log((1-\alphafrac)y_j+\alphafrac)\\
     &= -\sum_{i=1}^c y_i \log((1-\alphafrac)y_i) +\max_{j\in [c]}  y_j  \log\left( \frac{(1-\alphafrac)y_j}{(1-\alphafrac)y_j+\alphafrac}\right)
   \end{align*}
   Note that $ \frac{(1-\alphafrac)y_j}{(1-\alphafrac)y_j+\alphafrac}< 1$ for all $\alphafrac\in [0,1]$ and $y_j\in (0,1)$, hence the logarithm is negative and since $y_j>0$, the term we want to optimize is monotonically decreasing.
We conclude that $k=\argmin_{j\in [c]}y_j$ optimizes the problem and $\mathds{1}_k$ is an optimizer of \cref{eq:l-max}.
\end{proof}

\begin{corollary}[\gls{hips}+\lma{}]\label{cor:lied-lma}
Consider the following optimization problem
\begin{equation}\label{eq:lied-lma}
  \smashoperator[l]{\max_{\YB(x)\in \mathcal{A}}} \mathcal{L}\left((1-\alphafrac)\YH(x)+\alphafrac \YB(x),\YH(x)\right),
\end{equation}
where $\mathcal{A}\defi \operatorname{conv}\{(\Yt[i][t+1](x))_{i\in \Hclients }\}$.
Then under the assumption of \cref{lem:lma}, the problem admits a solution on an extreme point of $\mathcal{A}$.
\end{corollary}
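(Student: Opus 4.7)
The plan is to reduce Corollary~\ref{cor:lied-lma} directly to Lemma~\ref{lem:max-sol}. The feasible set $\mathcal{A} \defi \operatorname{conv}\{\Ytpi(x) : i \in \mathcal{H}\}$ is the convex hull of finitely many points, so it is a convex polytope. Moreover, since each honest prediction $\Ytpi(x) \in \simplex$ and $\simplex$ is convex, we have $\mathcal{A} \subseteq \simplex$. Thus it suffices to verify that the objective $\YB \mapsto \mathcal{L}((1-\alpha)\YH + \alpha\YB, \YH)$ is continuous and convex on $\mathcal{A}$ and then invoke Lemma~\ref{lem:max-sol}.

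The continuity and convexity were essentially already established in the proof of Lemma~\ref{lem:lma}, and they do not rely on the particular feasible region (as long as it lies inside $\simplex$ when dealing with \gls{cel}). In the \gls{mse} case, the objective simplifies to $\tfrac{\alpha^2}{2}\|\YB - \YH\|^2$, which is a squared Euclidean distance and hence continuous and convex on all of $\mathbb{R}^c$. In the \gls{cel} case, the objective equals $-\sum_{i=1}^c (\YH)_i \log\bigl((1-\alpha)(\YH)_i + \alpha(\YB)_i\bigr)$; each log argument is an affine function of $\YB$, and under the assumption $\YH \in \Int(\simplex)$ from Lemma~\ref{lem:lma} we have $(\YH)_i > 0$, so $(1-\alpha)(\YH)_i + \alpha(\YB)_i \ge (1-\alpha)(\YH)_i > 0$ for every $\YB \in \simplex \supseteq \mathcal{A}$. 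Hence each logarithm is well-defined and continuous on $\mathcal{A}$, and since $-\log$ is convex with nonnegative coefficients $(\YH)_i$, the sum is convex.

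With continuity and convexity confirmed on the convex polytope $\mathcal{A}$, Lemma~\ref{lem:max-sol} immediately yields an optimizer at some extreme point of $\mathcal{A}$, which by construction is one of the honest predictions $\Ytpi(x)$, $i \in \mathcal{H}$. There is no substantial obstacle beyond bookkeeping: the only subtle point is checking that restricting from $\simplex$ to $\mathcal{A}$ does not break the continuity of the \gls{cel} objective, which is handled by the inclusion $\mathcal{A} \subseteq \simplex$ together with $\YH \in \Int(\simplex)$. This also explains why the attacker can implement \gls{hips}+\gls{lma} by simply evaluating the loss at each honest prediction and selecting the maximizer, as described in Section~\ref{sec:lied}.
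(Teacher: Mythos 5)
Your proposal is correct and follows essentially the same route as the paper: both reduce the statement to Lemma~\ref{lem:max-sol} by observing that $\mathcal{A}$ is a convex polytope contained in $\simplex$ and that the continuity and convexity of the objective were already established in the proof of Lemma~\ref{lem:lma}. Your version is merely more explicit in re-verifying those hypotheses (in particular that $\YH \in \Int(\simplex)$ keeps the \acrshort{cel} objective well-defined on $\mathcal{A}$), which the paper's one-line proof leaves implicit.
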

\begin{proof}[Proof of \cref{cor:lied-lma}]
As shown in the proof of \cref{lem:lma}, for both cases $\mathcal{L}$ is differentiable and convex, hence the statement follows directly from \cref{lem:max-sol}.
\end{proof}

\subsection{Counterexample}
\label{sec:counterexample}
Consider the following three predictions
\begin{equation}
  \label{eq:2}
  y_1=
  \begin{pmatrix}
   0.7\\ 0.2\\ 0.1
  \end{pmatrix},\quad
y_2=
  \begin{pmatrix}
   0.8\\ 0.1\\ 0.1
  \end{pmatrix},\quad
y_3=
  \begin{pmatrix}
   0\\ 0\\ 1
  \end{pmatrix},
\end{equation}
which all lie in $\Delta_3$.
Then the coordinate-wise median (which is a special case of the 50 \% coordinate-wise trimmed mean) would yield $\tilde{y}=(0.7,0.1,0.1)^T$, which lies outside of the simplex.

\end{document}
\section{Algorithm Details}
\label{sec:algorithm-details}

\begin{algorithm}[h]
\label{alg:lied-lma}
  \caption{\lma{} + \gls{hips} }
  \begin{algorithmic}[1]
    \State \textbf{Input:} Honest predictions $Y_i(\Dp)$ for all $i\in \Hclients $, fraction of byzantine clients $\alphafrac$ (for LIED only)
    \vspace{0.1cm}
    \hrule
    \vspace{0.1cm}
    \For{$x\in \Dp$}
    \State $\YH(x)\gets \frac{1}{\vert \Hclients \vert}\sum_{i \in \Hclients } Y_i(x)$ \Comment{Compute mean honest prediction}
\State $\displaystyle\tilde{Y}(x)\gets \argmax_{i\in \Hclients } \cel(\YH(x),\alphafrac Y_i(x)+(1-\alphafrac)\YH(x))$ \Comment{Choose honest prediction with largest loss}
\State $\displaystyle\tilde{Y}(x)\gets \mathds{1}_i, \quad i\gets\argmin\YH(x)$ \Comment{Choose honest prediction with largest loss}
    \EndFor
    \State \textbf{Output}: $\tilde{Y}(\Dp)$
  \end{algorithmic}
\end{algorithm}

\begin{algorithm}[h]\label{alg:lied-cpa}
  \caption{\cpa{} + \gls{hips}}
  \begin{algorithmic}[1]
    \State \textbf{Input:} Honest predictions $Y_i(\Dp)$ for $i\in \Hclients $, similarity matrix $C\in \mathbb{R}^{c\times c}$
    \vspace{0.1cm}
    \hrule
    \vspace{0.1cm}
    \For{$x\in \Dp$}
    \State $\YH(x)\gets \frac{1}{\vert \Hclients \vert}\sum_{i \in \Hclients } Y_i(x)$ \Comment{Compute mean honest prediction}
\State $\displaystyle i\gets\argmin\YH(x)$ \Comment{}
\State $\displaystyle j^{*}\gets \argmin_j C_{ij}$ \Comment{}
\State $\displaystyle\tilde{Y}(x)\gets \mathds{1}_{j^{*}}$ \Comment{Choose honest prediction with largest loss}
\State $\displaystyle\tilde{Y}(x)\gets \alphafrac \mathds{1}_{j^{*}}+(1-\alphafrac)\YH(x)) $ \Comment{Choose honest prediction with largest loss}
    \EndFor
    \State \textbf{Output}: $\tilde{Y}(\Dp)$
  \end{algorithmic}
\end{algorithm}